\newtheorem{proof}{Proof}
\newtheorem{lemma}{Lemma}
\newtheorem{theorem}{Theorem} 
\newtheorem{assumption}{Assumption}
\newtheorem{definition}{Definition}
\title{Signal Transformer: Complex-valued Attention and Meta-Learning for Signal Recognition} %camel 
\author{%
	Yihong Dong\\
	Tongji University\\
	\texttt{duh\_dyh@tongji.edu.cn}\\
	\And
	Ying Peng\\
	Tongji University\\
	\texttt{1853287@tongji.edu.cn}\\
	\And
	Muqiao Yang\\
	Carnegie Mellon University\\
	\texttt{muqiaoy@cs.cmu.edu}\\
	\And
	Songtao Lu\\
	IBM Thomas J. Watson Research Center\\
	\texttt{songtao@ibm.com}\\
	\And
	Qingjiang Shi\\
	Tongji University\\
	\texttt{shiqj@tongji.edu.cn}\\
}
\begin{document}
	
	\maketitle
	
	\begin{abstract}
		Deep neural networks have been shown as a class of useful tools for addressing signal recognition issues in recent years, especially for identifying the nonlinear feature structures of signals. However, this power of most deep learning techniques heavily relies on an abundant amount of training data, so the performance of classic neural nets decreases sharply when the number of training data samples is small or unseen data are presented in the testing phase. This calls for an advanced strategy, i.e., model-agnostic meta-learning (MAML), which is able to capture the invariant representation of the data samples or signals. In this paper, inspired by the special structure of the signal, i.e., real and imaginary parts consisted in practical time-series signals, we propose a Complex-valued Attentional MEta Learner (CAMEL) for the problem of few-shot signal recognition by leveraging attention and meta-learning in the complex domain. To the best of our knowledge, this is also the first complex-valued MAML that can find the first-order stationary points of general nonconvex problems with theoretical convergence guarantees. Extensive experiments results showcase the superiority of the proposed CAMEL compared with the state-of-the-art methods.
	\end{abstract}
	
	\section{Introduction}
	With the recent explosion of deep learning, signal recognition has made some remarkable advances \cite{o2016convolutional,NIPS2017_6f2268bd,9234613,9383106}. To achieve these, a large volume of data is required to obtain satisfactory performance. However, the deep learning models trained with traditional supervised learning methods often perform poorly or even fail when only a small amount of data is available or when they need to adapt to unseen tasks or time-varying ones. In practical signal recognition tasks, the collection and annotation of abundant data are notoriously expensive, especially for some rare but important signals. Another critical challenge is the presence of noise, because the signal data varies for different signal-to-noise ratios (SNRs), and in real-world scenarios, the deep neural networks (DNNs) have to adapt to real-time variations in SNRs. 
	
	Meta-learning technique \cite{finn2017model,NEURIPS2018_8e2c381d,yoon2018bayesian,zhang2018metagan,balaji2018metareg} seeks to resolve above challenges by learning how to learn like humans do. We know that humans can effectively utilize prior knowledge and experience to learn new skills rapidly with very few examples. Similarly, the meta-learner is trained on the distribution of homogeneous tasks, with the goal of learning internal features that are broadly applicable to all tasks, rather than a single individual task. Equipped with these sensitive internal features, the meta-learner is able to produce significant improvements of adaptation ability via fine-tuning. Recently, meta-learning has demonstrated promising performance in many fields \cite{NEURIPS2019_92262bf9,NEURIPS2019_fd0a5a5e,NEURIPS2019_6fe43269,NEURIPS2019_b2945042,NEURIPS2019_7a9a322c,NEURIPS2019_f4aa0dd9,NEURIPS2019_8c235f89,NEURIPS2019_072b030b,zhuang2020no,NEURIPS2020_3214a6d8,NEURIPS2020_0a716fe8,NEURIPS2020_1e04b969,NEURIPS2020_4b86ca48,NEURIPS2020_ee89223a,NEURIPS2020_731c83db,NEURIPS2020_cc3f5463,NEURIPS2020_84c578f2,NEURIPS2020_171ae1bb,NEURIPS2020_bdbd5ebf,NEURIPS2020_cfee3986}. Please see the supplementary material for detailed related work in Section \ref{related work}. However, for some particular fields, especially signal recognition, existing meta-learning methods generally neglect the prior knowledge of the signals, i.e., temporal information and complex domain information. For models with insufficient training data, it is crucial to incorporate this prior knowledge.
	
	As such, we take into account the attention mechanism \cite{NIPS2017_3f5ee243} and the complex-valued neural network \cite{DBLP:conf/iclr/TrabelsiBZSSSMR18, hirose2012complex, tu2020complex} for signal recognition, respectively. Attention mechanisms have been widely adopted in many time series learning tasks, such as natural language processing. It became an integral component of Recurrent neural networks (RNNs), long short-term memory \cite{hochreiter1997long} and gated recurrent \cite{chung2014empirical} neural networks, until Transformer \cite{NIPS2017_3f5ee243} was proposed. Since then, self-attention is able to replace RNN with better performance and parallel computation. Therefore, we adapt the attention mechanism to the signal recognition task. Since the signals contain both magnitude and phase, complex numbers are used for the representation of signals. Consequently, complex arithmetic operations are the essential part of signal processing. Intuitively, complex-valued neural networks should be built to address the signal recognition problem. However, to the best of our knowledge, the meta-learning method equipped with attention mechanisms in the complex-valued neural networks has not been investigated. 
	
	In this paper, we propose a Complex-valued Attentional MEta Learner (CAMEL), for few-shot signal recognition, which generalizes meta-learning and attention to the complex domain. With the help of these novel designs, CAMEL has succeeded in capturing more information from the signal data. The prior knowledge assists CAMEL in preventing overfitting and improving its performance. For better understanding the proposed architecture, the overview of CAMEL is illustrated in Figure \ref{overview}. Notice that CAMEL can be applied to any kind of complex-valued data. By leveraging existing meta-learning and few learning methods in extensive experiments, the proposed method shows consistently better performance compared with the state-of-the-art methods. The effectiveness of each novel component in CAMEL is verified via ablation studies. From the convergence analysis of 
	complex-valued MAML, it is shown that CAMEL is able to find an $\epsilon$ first-order stationary point for any positive $\epsilon$ after at most $O\left(1/\epsilon^2\right)$ iterations with second-order information.
	
	% In this paper, we propose a complex domain attentional meta-learning framework, CAMEL, for signal recognition. The key innovation of the proposed approach is the design of complex domain attention and complex domain meta-learning, which can be used not only for signal but also for any kind of complex-valued data. Even with extremely small dataset, CAMEL can achieve superior performance with the help of the above novel designs. For better understanding, the overview of CAMEL is illustrated in Figure 1.
	\begin{figure}[t]
		\centering
		\includegraphics[width= 10cm]{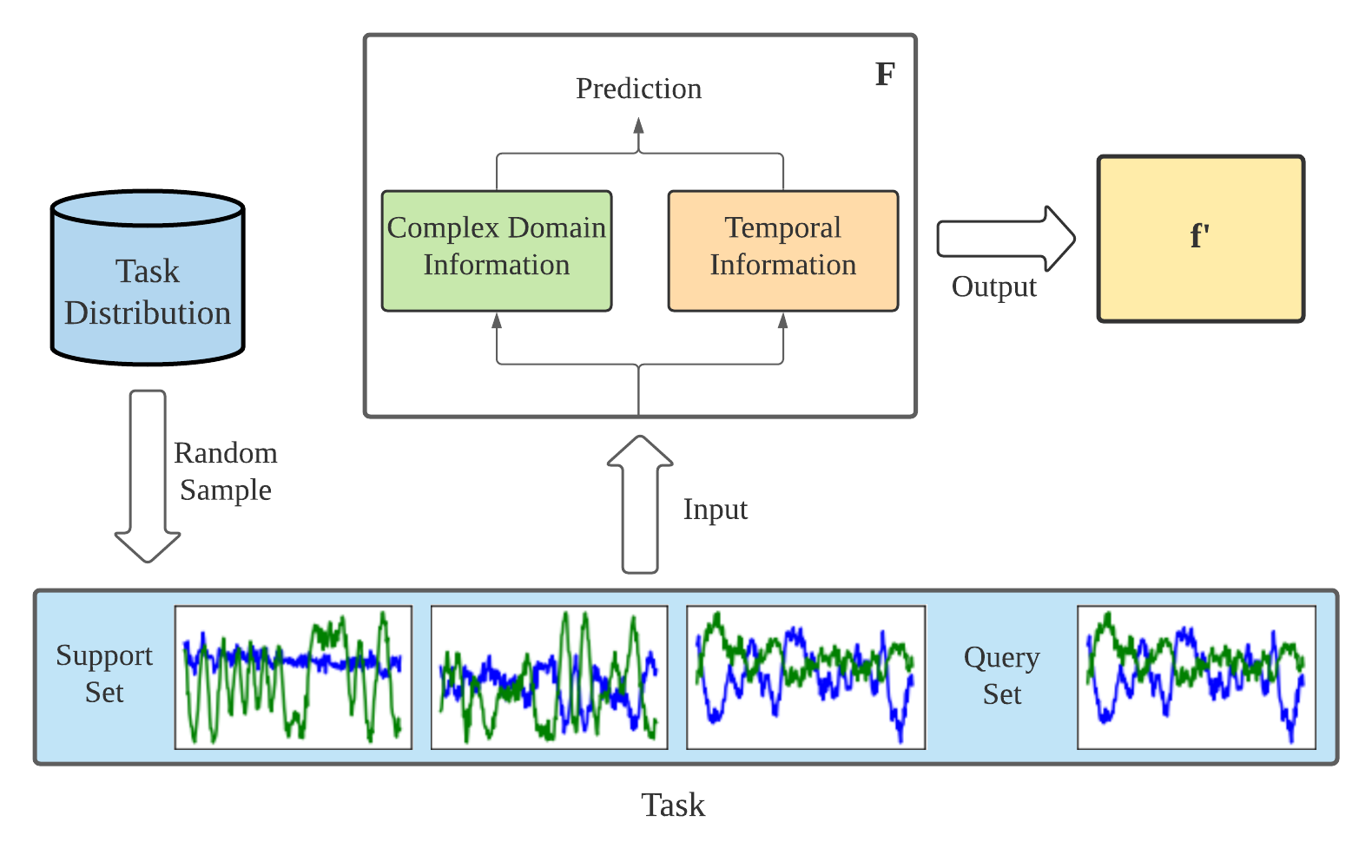}
		\caption{The overview of CAMEL.}
		\label{overview}
	\end{figure}
	% % Overall, we summarize the main contributions of our work as follows. First, we propose a complex domain attentional meta-learning framework, CAMEL, for signal recognition, which extends attention and meta-learning to the complex domain. With these novel designs, CAMEL has succeeded in capturing more information from the complex-valued data and improving the performance. Second, by comparing with existing meta-learning and few learning methods in extensive experiments, the state-of-the-art performance of the proposed method is observed. The effectiveness of each novel component in CAMEL is verified via ablation studies.
	
	% We summarize the main contributions of our work as:
	% \begin{itemize}[leftmargin=20 pt]
	%     \item First, we propose a complex domain attentional meta-learning framework, CAMEL, for signal recognition, which extends attention and meta-learning to the complex domain. With these novel designs, CAMEL has succeeded in capturing more information from the complex-valued data and improving the performance.
	%     \item Second, by comparing with existing meta-learning and few learning methods in extensive experiments, the state-of-the-art performance of the proposed method is observed. The effectiveness of each novel component in CAMEL is verified via ablation studies. 
	% \end{itemize}
	
	The code of this paper will be released upon acceptance. Please see the supplementary material for notations, detailed derivation of Lemma, and more experiment results.
	
	%Nomenclature

	\section{Motivation}
	Meta-learning is one of the most suited techniques to solve signal recognition problems, because, in the real world, signal annotation is expensive and models need to adapt to changing SNRs, whereas meta-learning has an explicit goal of fast adaptation. To further improve the effectiveness of meta-learning in applications to signal processing, we consider incorporating prior knowledge of signal data to the model by CAMEL that can generalize meta-learning with attention to the complex domain so that we are able to extract complex domain and temporal information from signal data.
	
	%Compared with ordinary real-valued neural networks, complex-valued neural networks can complete complex operations with fewer parameters. For complex-valued data, e.g., signals, it is intuitive to utilize complex-valued neural networks for recognition.
	However, lots of so called complex-valued neural networks treat a complex number as two real numbers, i.e., real and imaginary parts of the complex number, and design special network structures to recover complex operations using these real numbers. We refer to these special complex-valued neural networks as in-phase/quadrature complex-valued neural networks (IQCVNNs). Although IQCVNNs can deal with complex-valued problems, essentially the neural nets are still working with real-valued ones, since IQCVNNs work without defining complex derivatives and the complex chain rules in back-propagation. We refer to the complex-valued neural networks that define complex derivatives and the complex chain rules as complex derivatives complex-valued neural networks (CDCVNNs). It turns out that compared with IQCVNNs, CDCVNNs can perform complex operations with fewer parameters. To be more specific, we give the following lemmas to show the significance of CDVNNs compared with IQCVNNs with respect to time complexity.
	\begin{lemma} \label{lemma1}
		If a function $g$ is complex analytic, the time complexity of the derivative of $g$ in IQCVNNs are twice that of the complex derivative of $g$ in CDCVNNs. 
	\end{lemma}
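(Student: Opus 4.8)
The plan is to make the informal cost comparison precise by fixing an elementary cost model and then exhibiting, for the same analytic $g$, an exact factor-two gap between the two representations. Write $z = x + iy$ and $g(z) = u(x,y) + i\,v(x,y)$. In an IQCVNN the map is implemented as the real pair $(u,v):\mathbb{R}^2\to\mathbb{R}^2$, so ``the derivative of $g$'' is the real Jacobian
\[
J \;=\; \begin{pmatrix} \partial u/\partial x & \partial u/\partial y \\ \partial v/\partial x & \partial v/\partial y \end{pmatrix},
\]
whose evaluation requires four real-valued partial derivatives, and whose propagation under the chain rule is a $2\times 2$ real matrix product (eight real multiplications and four real additions per layer). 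In a CDCVNN the derivative is the single complex number $g'(z)$; since $g$ is complex analytic, the Cauchy--Riemann equations $\partial u/\partial x = \partial v/\partial y$ and $\partial u/\partial y = -\partial v/\partial x$ hold, so $g'(z) = \partial u/\partial x + i\,\partial v/\partial x$ is fixed by only two of the four partials, and its propagation under the complex chain rule is a single complex multiplication (four real multiplications and two real additions per layer).

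First I would formalize ``time complexity'' as this per-operation count and observe that a single scalar partial-derivative evaluation costs the same inside the IQCVNN graph as inside the CDCVNN graph, because both graphs build $u$ and $v$ from the same elementary operations on $x,y$. The claim then reduces to arithmetic bookkeeping: four partials versus two, and $8+4$ real operations per chain-rule step versus $4+2$, giving a ratio of exactly $2$ in both the direct-evaluation and the backpropagation settings. Summing the per-layer costs over the network preserves the factor, since every layer contributes the same constant $2$, which yields the stated doubling.

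The main obstacle is pinning the cost model down tightly enough that the conclusion is literally ``twice'' rather than merely ``within a constant factor'': one has to argue that an IQCVNN, by construction, does not exploit the Cauchy--Riemann redundancy among its four Jacobian entries (otherwise it could recompute only two of them and close the gap), and conversely that a CDCVNN genuinely needs only the single holomorphic derivative $g'$ and not the full Wirtinger pair $(\partial_z g,\ \partial_{\bar z} g)$. This is exactly where the hypothesis that $g$ is complex analytic is indispensable, since then $\partial_{\bar z} g = 0$ and the $\bar z$-branch of the computation is vacuous. Once these two modeling points are granted, the remaining steps are the routine operation counts for a $2\times 2$ real matrix product against a complex product.
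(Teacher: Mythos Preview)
Your proposal is correct and follows essentially the same route as the paper: both arguments count four real partials for the IQCVNN Jacobian against two for the CDCVNN holomorphic derivative in the simple case, and then compare a $2\times 2$ real matrix product against a single complex multiplication in the composite (chain-rule) case to obtain the factor of two. The paper additionally frames the composite case for vector inputs $\mathbf{z}\in\mathbb{C}^m$ (getting $\mathcal{O}(8m)$ vs.\ $\mathcal{O}(4m)$) and extends to $N$-layer compositions by serial summation, but these are the same computations you describe, scaled by $m$ and $N-1$; your explicit discussion of the modeling assumption that IQCVNNs do not exploit the Cauchy--Riemann redundancy is a point the paper leaves implicit.
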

	\begin{lemma}  \label{lemma2}
		The complex-valued convolutional layer and complex-valued fully connected layer is complex analytic.
	\end{lemma}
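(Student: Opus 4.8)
The plan is to start from a convenient characterization of complex analyticity: a map $g:\mathbb{C}^n\to\mathbb{C}^m$ is complex analytic exactly when each coordinate function is holomorphic in every input variable, equivalently when every Wirtinger derivative with respect to a conjugate variable vanishes (so that the Cauchy--Riemann relations hold on all of $\mathbb{C}^n$), equivalently when $g$ admits a locally convergent power-series expansion. With this in hand, both halves of the lemma reduce to the observation that the layer outputs are ordinary polynomials in the complex inputs in which no conjugate ever appears. I would also fix at the outset that ``layer'' here means the affine transformation realized by the complex weights and bias with conjugation-free complex arithmetic, since this is precisely the regime in which the complex chain rule invoked by CDCVNNs (and hence Lemma~\ref{lemma1}) applies.

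First I would treat the complex-valued fully connected layer. Writing the complex weight matrix $W$, complex bias $b$, and complex input $z$, the layer computes $f(z)=Wz+b$, i.e. coordinatewise $f_k(z)=\sum_{j}W_{kj}z_j+b_k$. Each $f_k$ is an affine (degree-one) polynomial in $z_1,\dots,z_n$ with no occurrence of any $\bar z_j$, so $\partial f_k/\partial\bar z_j=0$ for all $j,k$ and the Cauchy--Riemann conditions hold everywhere; hence $f$ is entire, in particular complex analytic. (Equivalently, $f$ is its own terminating power series about any base point.)

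Next I would handle the complex-valued convolutional layer. A discrete complex convolution with complex kernel $w$ produces outputs of the form $y[n]=\sum_{k}w[k]\,z[n-k]+b$, and the multichannel, multidimensional version is merely a finite sum of such terms over input channels and spatial offsets. This is again a finite $\mathbb{C}$-linear combination of the complex inputs plus a complex constant, i.e. an affine polynomial map in the entries $z[\cdot]$ with no conjugation, so the Wirtinger / Cauchy--Riemann argument of the previous paragraph applies verbatim and the convolutional layer is complex analytic. If needed, I would close by noting that composition preserves complex analyticity, so a stack of such layers (together with the holomorphic activations appropriate to CDCVNNs) is itself complex analytic.

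There is no real difficulty here; the one point requiring care is the precise identification of the object being analyzed. If a typical split-type or magnitude-based nonlinearity were folded into the ``layer'', the composite would fail to be holomorphic, so I would be explicit that the statement concerns the linear maps implemented by the convolution and by the fully connected weights. Once that is pinned down, the proof is the short polynomial/Wirtinger computation sketched above.
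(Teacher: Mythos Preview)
Your proposal is correct and takes essentially the same approach as the paper: both reduce to checking that an affine map $z\mapsto Wz+b$ (of which convolution is a special case) satisfies the Cauchy--Riemann equations. The only cosmetic difference is that the paper expands explicitly into real and imaginary parts to verify the four CR identities, whereas you invoke the equivalent Wirtinger criterion $\partial f_k/\partial\bar z_j=0$; the mathematical content is identical.
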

	As we know, the convolutional and fully connected layers are the most computationally intensive parts of a neural network. Therefore, although it has a similar effect to the complex-valued neural network, IQCVNNs far exceed the CDCVNNs in terms of the time complexity of back-propagation. Especially, meta-learning requires second-order information of the objective function to guarantee convergence \cite{fallah2020convergence}, which forces us to implement CDCVNNs. The complex chain rule is a key to implementing CDCVNNs. According to the complex chain rule, we are able to derive the \emph{outer-loop update process} of CAMEL, which is different from that of MAML.
	
	Complex-valued attention is also necessary for CAMEL to obtain the temporal information from signal data. However, in complex-valued attention, it is required to compute the derivative of the mapping from complex to real domain since calculating the similarity coefficient between two pairs leads to the real numbers in the activation function of the complex-valued neural nets. Given the following lemma, we know that the derivative of the function will be non-analytic, since constant function is useless in identifying the features of data.
	\begin{lemma}\label{cr}
		$\forall g: \mathbb{C} \rightarrow \mathbb{R}$, $g$ is analytic if and only if $g$ is a constant function.
	\end{lemma}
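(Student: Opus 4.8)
The plan is to prove the two directions separately, with essentially all of the work in the forward implication. The reverse direction is immediate: if $g \equiv c$ for some $c \in \mathbb{R}$, then $g$ extends trivially to an entire function whose complex derivative is identically zero, so $g$ is analytic; this requires nothing beyond the definition.

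For the forward direction I would argue via the Cauchy--Riemann equations. Writing $z = x + iy$ and decomposing $g(z) = u(x,y) + i\,v(x,y)$ into real and imaginary parts, the hypothesis that $g$ is real-valued gives $v \equiv 0$. Analyticity of $g$ on its open domain forces the Cauchy--Riemann relations $u_x = v_y$ and $u_y = -v_x$; substituting $v \equiv 0$ yields $u_x = u_y = 0$ identically. Hence $\nabla u$ vanishes on the domain, so $u$ is constant, and therefore $g = u$ is a real constant.

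The one point that needs care — and the place I would be explicit — is that the step ``$\nabla u \equiv 0 \Rightarrow u$ constant'' requires the domain of $g$ to be connected; on a disconnected open set $g$ could take a different real constant on each component. In the present context the relevant domain is all of $\mathbb{C}$ (or an open ball arising in the network's computations), so connectedness holds and the argument goes through. An alternative, slightly slicker proof I could give instead invokes the open mapping theorem: a non-constant holomorphic function is an open map, yet $g(\mathbb{C}) \subseteq \mathbb{R}$ has empty interior in $\mathbb{C}$, a contradiction; hence $g$ is constant. I would present the Cauchy--Riemann version as the main proof, since it is elementary and self-contained, and note the open-mapping argument as a brief remark.
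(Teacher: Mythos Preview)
Your proposal is correct and follows essentially the same approach as the paper: both use the Cauchy--Riemann equations together with $\Im(g)\equiv 0$ to force all partial derivatives of $\Re(g)$ to vanish, and hence conclude $g$ is constant. Your version is in fact slightly more careful than the paper's, since you explicitly handle the reverse implication and flag the connectedness hypothesis, neither of which the paper mentions.
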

	
	To the best of our knowledge, attention in the complex domain has rarely been studied. \footnote{A closely related work is \cite{yang2020complex}, which proposed a complex transformer and developed attention and encoder-decoder network operating for complex input. However, they utilized eight attentions to represent complex-valued attention without considering the nonlinear components of attention such as softmax and activation functions, etc.}. Therefore, we here study complex-valued attention and propose CAMEL as presented in the next section.

	\section{CAMEL}
	Please see the supplementary material for the definitions of complex derivative, analytic function, and the Cauchy-Riemann equations in Section \ref{Definition Recall}. 

	\subsection{Algorithm Design}
	\label{CAMEL}
	CAMEL utilizes complex-valued neural networks and attention to provide prior knowledge, i.e., complex domain and temporal information, to prevent overfitting during training. It resembles its namesake animal, camel, which stores water and nutrients with its hump to ensure its survival in extreme conditions.
	
	CAMEL updates parameters through back-propagation by the chain rule. However, traditional chain rule does not work, because CAMEL is non-analytic. 
	
	\textbf{The chain rule for complex variables} \ The chain rule is different when the function is non-analytic. For a non-analytic composite function $g(\mathbf{u})$, where $\mathbf{u} = h(\mathbf{x})$, we can apply the following chain rule:
	\begin{equation} \label{cvcr}
		\dfrac{\partial g(\mathbf{u})}{\partial \mathbf{x}} =  \dfrac{\partial g(\mathbf{u})}{\partial \mathbf{u}} \dfrac{\partial \mathbf{u}}{\partial \mathbf{x}} + \dfrac{\partial g(\mathbf{u})}{\partial \mathbf{u}^*} \dfrac{\partial \mathbf{u}^*}{\partial \mathbf{x}}
	\end{equation}
	where $g$ is a continuous function and $\mathbf{u}^*$ denotes the conjugate vector of $\mathbf{u}$. Note that if the function is analytic, the second term equals zero and \eqref{cvcr} turns into the normal chain rule. In the case of matrix derivatives, the chain rule can be written as:
	\begin{equation} \label{cvcr2}
		\dfrac{\partial g(\mathbf{U})}{\partial \mathbf{X}} =  \dfrac{\partial \text{Tr}\left( \left(\dfrac{\partial g(\mathbf{U})}{\partial \mathbf{U}}\right)^T\partial \mathbf{U}\right)}{\partial \mathbf{X}}+ \dfrac{\partial \text{Tr}\left( \left(\dfrac{\partial g(\mathbf{U})}{\partial \mathbf{U^*}}\right)^T\partial \mathbf{U^*}\right)}{\partial \mathbf{X}}
	\end{equation}
	where $\mathbf{U} = h(\mathbf{X})$ is non-analytic, $\mathbf{U}$ and $\mathbf{X}$ are two complex matrices, and $\left(\cdot \right)^T$ denotes the transpose of a matrix.
	
	Under \eqref{cvcr} and \eqref{cvcr2}, CAMEL is able to update the parameters as expected. Formally, we define the base model of CAMEL to be a complex-valued attentional neural network with meta-parameters $\bm{\theta} \in \mathbb{C}$. The goal is to learn a sensitive initial $\bm{\theta}$, for which the network performs well on the $i$th query set $Q_i$ after few gradient update steps on the $i$th support set $S_i$ to obtain $\bm{\theta}'_i$.
	Here, $\mathcal{T}_i = \{S_i, Q_i\}$ is a task randomly sampled from the task probability distribution $p(\mathcal{T})$. The update steps above are termed as the \emph{inner-loop update process}, which can be represented as:
	
	% The $n$ update steps above are termed as the \emph{inner-loop update process}.
	% The meta-parameters $\bm{\theta}$ updated on the support set $S_i$ can be represented as:
	\begin{equation}\label{g}
		\bm{\theta}'_i = \bm{\theta} - \alpha \nabla_{\bm{\theta}} \mathcal{L}_{S_i}\left(\bm{\theta}\right)
	\end{equation}
	where $\alpha$ is a learning rate and $\nabla_{\bm{\theta}} \mathcal{L}_{S_i}\left(\bm{\theta}\right)$ denotes the gradient on the support set of task $i$. The meta-parameters $\bm{\theta}$ are trained by optimizing the performance of $\bm{\theta}'_i$. Consequently, the \emph{meta-objective} is defined as follows: 
	\begin{equation}
		\begin{aligned} \label{lmeta}
			\min_{\bm{\theta}} \mathcal{L}_{meta}(\bm{\theta})  \triangleq  \mathbb{E}_{\{S_i, Q_i\} \sim p(\mathcal{T})} \left[ \mathcal{L}_{Q_i}(\bm{\theta}'_i)\right]
		\end{aligned}
	\end{equation}
	where $\mathcal{L}_{Q_i}(\bm{\theta}'_i)$ denotes the loss on the query set of task $i$ after the inner-loop update process. 
	As the underlying $p(\mathcal{T})$ is unknown, evaluation of the expectation in the right hand side of \eqref{lmeta} is often computationally prohibitive. Therefore, we can minimize the function $ \mathcal{L}_{meta}(\bm{\theta})$ with a batch of tasks $\left\{\mathcal{T}_i\right\}^B_{i=1}$ that are independently drawn from $p(\mathcal{T})$, which can be expressed as:
	\begin{equation}\label{7}
		\begin{aligned}
			\mathcal{L}_{meta}(\bm{\theta}) & = \dfrac{1}{B} \Sigma_{\{S_i, Q_i\} \sim p(\mathcal{T})} \  \mathcal{L}_{Q_i}(\bm{\theta}'_i)\\
			& = \dfrac{1}{B} \Sigma_{\{S_i, Q_i\} \sim p(\mathcal{T})} \ \mathcal{L}_{Q_i}\left(\bm{\theta} - \alpha \Sigma_j \nabla_{\bm{\theta}'_i} \mathcal{L}_{S_i}\left(\bm{\theta}'_i\right)\right).
		\end{aligned}
	\end{equation}
	The optimization of the meta-objective is referred to as the \emph{outer-loop update process}, which can be expressed as:
	\begin{equation} \label{ou}
		\bm{\theta} \leftarrow \bm{\theta} - \beta \nabla_{\bm{\theta}} \mathcal{L}_{meta}(\bm{\theta})
	\end{equation}
	where $\beta$ denotes the meta learning rate. Define 
	\begin{equation} \label{H}
		\mathbf{H}^i_{\bm{\theta} \bm{\theta}} \triangleq \left(\dfrac{\partial \nabla_{\bm{\theta}}  \mathcal{L}_{S_i} \left(\bm{\theta}\right)}{\partial \bm{\theta}}\right)^*, \qquad 
		\mathbf{H}^i_{\bm{\theta}^* \bm{\theta}} \triangleq \left(\dfrac{\partial \nabla_{\bm{\theta}^*}  \mathcal{L}_{S_i} \left(\bm{\theta}\right)}{\partial \bm{\theta}}\right)^*.
	\end{equation}
	\begin{lemma} \label{lemma4}
		In response to complex meta-parameters $\bm{\theta}$, we have
		\begin{equation}\label{lq}
			\nabla_{\bm{\theta}} \mathcal{L}_{meta}(\bm{\theta}) = \dfrac{1}{B} \Sigma_{\{S_i, Q_i\} \sim p(\mathcal{T})}  \left( \mathbf{I} - \alpha \mathbf{H}^i_{\bm{\theta} \bm{\theta}} \right) \nabla_{\bm{\theta}'_i} \mathcal{L}_{Q_i}(\bm{\theta}'_i) - \alpha  \mathbf{H}^i_{\bm{\theta}^* \bm{\theta}} \nabla_{\bm{(\bm{\theta}'_i)^*}} \mathcal{L}_{Q_i}(\bm{\theta}'_i).
		\end{equation}
	\end{lemma}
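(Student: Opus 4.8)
\emph{Proof proposal.} The plan is to reduce the claim to a single task and then unwind the composition with the complex chain rule \eqref{cvcr}. By \eqref{7}, $\mathcal{L}_{meta}(\bm{\theta})$ is a finite average $\frac{1}{B}\sum_i \mathcal{L}_{Q_i}(\bm{\theta}'_i)$, and since the Wirtinger operator $\nabla_{\bm{\theta}}$ is linear it suffices to compute $\nabla_{\bm{\theta}}\mathcal{L}_{Q_i}(\bm{\theta}'_i)$ for one index $i$ and then sum. The object to differentiate is the composition $\bm{\theta}\mapsto\bm{\theta}'_i\mapsto\mathcal{L}_{Q_i}(\bm{\theta}'_i)$, where by \eqref{g} the inner map $\bm{\theta}'_i=\bm{\theta}-\alpha\nabla_{\bm{\theta}}\mathcal{L}_{S_i}(\bm{\theta})$ is non-analytic (because $\nabla_{\bm{\theta}}\mathcal{L}_{S_i}$ depends on both $\bm{\theta}$ and $\bm{\theta}^*$), and by Lemma \ref{cr} the real-valued outer map $\mathcal{L}_{Q_i}$ is non-analytic as well; hence the ordinary chain rule is unavailable and \eqref{cvcr} (or equivalently its matrix form \eqref{cvcr2}) must be applied with $g=\mathcal{L}_{Q_i}$, $\mathbf{u}=\bm{\theta}'_i$, $\mathbf{x}=\bm{\theta}$.

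First I would differentiate the inner map. From $\bm{\theta}'_i=\bm{\theta}-\alpha\nabla_{\bm{\theta}}\mathcal{L}_{S_i}(\bm{\theta})$ and its conjugate $(\bm{\theta}'_i)^*=\bm{\theta}^*-\alpha\nabla_{\bm{\theta}^*}\mathcal{L}_{S_i}(\bm{\theta})$ (using $(\nabla_{\bm{\theta}}\mathcal{L}_{S_i})^*=\nabla_{\bm{\theta}^*}\mathcal{L}_{S_i}$ for real $\mathcal{L}_{S_i}$), together with the elementary facts $\partial\bm{\theta}/\partial\bm{\theta}=\mathbf{I}$ and $\partial\bm{\theta}^*/\partial\bm{\theta}=\mathbf{0}$, I get $\partial\bm{\theta}'_i/\partial\bm{\theta}=\mathbf{I}-\alpha\,\partial\nabla_{\bm{\theta}}\mathcal{L}_{S_i}(\bm{\theta})/\partial\bm{\theta}$ and $\partial(\bm{\theta}'_i)^*/\partial\bm{\theta}=-\alpha\,\partial\nabla_{\bm{\theta}^*}\mathcal{L}_{S_i}(\bm{\theta})/\partial\bm{\theta}$; comparing with the definitions in \eqref{H}, these are exactly the conjugates of $\mathbf{I}-\alpha\mathbf{H}^i_{\bm{\theta}\bm{\theta}}$ and $-\alpha\mathbf{H}^i_{\bm{\theta}^*\bm{\theta}}$. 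Substituting the two Jacobians into \eqref{cvcr} writes $\partial\mathcal{L}_{Q_i}(\bm{\theta}'_i)/\partial\bm{\theta}$ as the sum of $\bigl(\partial\mathcal{L}_{Q_i}/\partial\bm{\theta}'_i\bigr)\bigl(\mathbf{I}-\alpha\mathbf{H}^i_{\bm{\theta}\bm{\theta}}\bigr)^*$ and $\bigl(\partial\mathcal{L}_{Q_i}/\partial(\bm{\theta}'_i)^*\bigr)\bigl(-\alpha\mathbf{H}^i_{\bm{\theta}^*\bm{\theta}}\bigr)^*$. Conjugating the whole identity is legitimate because $\mathcal{L}_{Q_i}$ is real, so $\nabla_{\bm{\theta}}\mathcal{L}_{Q_i}(\bm{\theta}'_i)$ is the conjugate of $\partial\mathcal{L}_{Q_i}(\bm{\theta}'_i)/\partial\bm{\theta}$; this turns the outer Wirtinger derivatives into $\nabla_{\bm{\theta}'_i}\mathcal{L}_{Q_i}(\bm{\theta}'_i)$ and $\nabla_{(\bm{\theta}'_i)^*}\mathcal{L}_{Q_i}(\bm{\theta}'_i)$ and restores the matrices to $\mathbf{I}-\alpha\mathbf{H}^i_{\bm{\theta}\bm{\theta}}$ and $-\alpha\mathbf{H}^i_{\bm{\theta}^*\bm{\theta}}$, and averaging over $i$ yields \eqref{lq}.

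The step I expect to be the real obstacle is not the chain rule itself but the transpose/conjugation bookkeeping around \eqref{H}: \eqref{cvcr} is most naturally written with row (Jacobian) derivatives acting on the right, whereas \eqref{lq} displays column gradients with the Hessian-type matrices acting on the left, so passing cleanly between the two forms requires invoking the Hermitian symmetry of the mixed Wirtinger block $\partial_{\bm{\theta}}\partial_{\bm{\theta}^*}\mathcal{L}_{S_i}$ and the complex symmetry of $\partial_{\bm{\theta}}\partial_{\bm{\theta}}\mathcal{L}_{S_i}$ in order to land on exactly $\mathbf{H}^i_{\bm{\theta}\bm{\theta}}$ and $\mathbf{H}^i_{\bm{\theta}^*\bm{\theta}}$ rather than their transposes or conjugates. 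It is also worth checking explicitly that no $\mathbf{H}_{\bm{\theta}^*\bm{\theta}^*}$- or $\mathbf{H}_{\bm{\theta}\bm{\theta}^*}$-type block survives; this follows because $\partial\bm{\theta}^*/\partial\bm{\theta}=\mathbf{0}$ annihilates the only terms that could produce one. The interchange of $\nabla_{\bm{\theta}}$ with the finite sum in \eqref{7} is immediate, and the remaining manipulations are routine Wirtinger calculus.
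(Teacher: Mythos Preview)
Your proposal is correct and follows essentially the same route as the paper's own proof: reduce to a single task by linearity of \eqref{7}, apply the complex chain rule \eqref{cvcr} to the composition $\bm{\theta}\mapsto\bm{\theta}'_i\mapsto\mathcal{L}_{Q_i}(\bm{\theta}'_i)$, compute the two inner Jacobians from \eqref{g} and its conjugate, and use real-valuedness of $\mathcal{L}_{Q_i}$ to pass from $\partial/\partial\bm{\theta}$ to $\nabla_{\bm{\theta}}$ via conjugation. One small remark: the transpose/Hermitian-symmetry obstacle you anticipate does not actually arise in the paper's argument, because the definitions in \eqref{H} already carry an outer conjugate, so $(\partial\bm{\theta}'_i/\partial\bm{\theta})^*$ and $(\partial(\bm{\theta}'_i)^*/\partial\bm{\theta})^*$ land \emph{directly} on $\mathbf{I}-\alpha\mathbf{H}^i_{\bm{\theta}\bm{\theta}}$ and $-\alpha\mathbf{H}^i_{\bm{\theta}^*\bm{\theta}}$ without any appeal to symmetry of the Wirtinger Hessian blocks; and the paper uses the convention $\nabla_{\bm{\theta}}=2\,\partial/\partial\bm{\theta}^*$ from \eqref{cgv}, so the factor of $2$ is what converts the conjugated Wirtinger derivatives $2\,\partial\mathcal{L}_{Q_i}/\partial(\bm{\theta}'_i)^*$ and $2\,\partial\mathcal{L}_{Q_i}/\partial\bm{\theta}'_i$ into $\nabla_{\bm{\theta}'_i}\mathcal{L}_{Q_i}$ and $\nabla_{(\bm{\theta}'_i)^*}\mathcal{L}_{Q_i}$ in the last step.
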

	% \begin{equation}\label{lq}
	%     \nabla_{\bm{\theta}} \mathcal{L}_{meta}(\bm{\theta}) = \Sigma_{\{S_i, Q_i\} \sim p(\mathcal{T})}  \nabla_{\bm{\theta}'_i} \mathcal{L}_{Q_i}(\bm{\theta}'_i) - \alpha \left( \mathbf{H}^i_{\bm{\theta} \bm{\theta}} \nabla_{\bm{\theta}'_i} \mathcal{L}_{Q_i}(\bm{\theta}'_i) +  \mathbf{H}^i_{\bm{\theta}^* \bm{\theta}} \nabla_{\bm{(\bm{\theta}'_i)^*}} \mathcal{L}_{Q_i}(\bm{\theta}'_i)\right) 
	% \end{equation}
	% \begin{equation} \label{lm}
	%     \nabla_{\bm{\theta}} \mathcal{L}_{meta}(\bm{\theta}) = \Sigma_{\{S_i, Q_i\} \sim p(\mathcal{T})} \nabla_{\bm{\theta}} \mathcal{L}_{Q_i} \left(\bm{\theta}'_i\right)
	% \end{equation}
	
	% $\nabla_{\bm{\theta}} \mathcal{L}_{meta}(\bm{\theta})$ in \eqref{ou} can be simplified with the complex chain rule:
	% \begin{equation}
	%     \nabla_{\bm{\theta}} \mathcal{L}_{meta}(\bm{\theta}) = \dfrac{\partial \mathcal{L}_{meta}}{\partial \bm{\theta}}
	% \end{equation}
	Then, according to \eqref{lq}, the \emph{outer-loop update process} for complex meta-parameters $\bm{\theta}$ is
	\begin{equation}
		\bm{\theta} \leftarrow \bm{\theta} - \beta \dfrac{1}{B} \Sigma_{\{S_i, Q_i\} \sim p(\mathcal{T})} \left( \mathbf{I} - \alpha \mathbf{H}^i_{\bm{\theta} \bm{\theta}} \right) \nabla_{\bm{\theta}'_i} \mathcal{L}_{Q_i}(\bm{\theta}'_i) - \alpha  \mathbf{H}^i_{\bm{\theta}^* \bm{\theta}} \nabla_{\bm{(\bm{\theta}'_i)^*}} \mathcal{L}_{Q_i}(\bm{\theta}'_i).
	\end{equation}
	The complete algorithm description of is outlined in Algorithm \ref{alg:training}.
	
	\begin{algorithm}[htb]
		\caption{Pseudocode for CAMEL Update}
		\label{alg:training}
		\begin{algorithmic}
			\REQUIRE The distribution over tasks $p(\mathcal{T})$.
			\REQUIRE The learning rates $\alpha, \beta$.
			\ENSURE The meta-parameters $\bm{\theta}$ of CAMEL.
			\STATE Randomly initialize the meta-parameters $\bm{\theta}$ of CAMEL.
			\REPEAT
			\STATE Sample batch of tasks $\mathcal{T}_i = \{S_i, Q_i\} \sim p(\mathcal{T})$
			\FOR{each $\{S_i, Q_i\}$}
			\STATE Evaluate $\nabla_{\bm{\theta}_i} \mathcal{L}_{S_i}\left(\bm{\theta}_i\right)$ via the complex chain rule \eqref{cvcr} and \eqref{cvcr2}.
			\STATE Update $\bm{\theta}'_i = \bm{\theta} - \alpha \nabla_{\bm{\theta}} \mathcal{L}_{S_i}\left(\bm{\theta}\right)$.
			\STATE Set $\mathbf{H}^i_{\bm{\theta} \bm{\theta}} = \left(\dfrac{\partial \nabla_{\bm{\theta}}  \mathcal{L}_{S_i} \left(\bm{\theta}\right)}{\partial \bm{\theta}}\right)^*$ and $\mathbf{H}^i_{\bm{\theta}^* \bm{\theta}} = \left(\dfrac{\partial \nabla_{\bm{\theta}^*}  \mathcal{L}_{S_i} \left(\bm{\theta}\right)}{\partial \bm{\theta}}\right)^*$
			\STATE Evaluate $\nabla_{\bm{\theta}'_i} \mathcal{L}_{Q_i}(\bm{\theta}'_i)$ and $\nabla_{\bm{(\bm{\theta}'_i)^*}} \mathcal{L}_{Q_i}(\bm{\theta}'_i)$ via the complex chain rule \eqref{cvcr} and \eqref{cvcr2}.
			\ENDFOR
			\STATE Update $\bm{\theta} \leftarrow \bm{\theta} - \beta \dfrac{1}{B} \Sigma_{\{S_i, Q_i\} \sim p(\mathcal{T})} \left( \mathbf{I} - \alpha \mathbf{H}^i_{\bm{\theta} \bm{\theta}} \right) \nabla_{\bm{\theta}'_i} \mathcal{L}_{Q_i}(\bm{\theta}'_i) - \alpha  \mathbf{H}^i_{\bm{\theta}^* \bm{\theta}} \nabla_{\bm{(\bm{\theta}'_i)^*}} \mathcal{L}_{Q_i}(\bm{\theta}'_i)$
			\UNTIL{convergence}
		\end{algorithmic}
	\end{algorithm}
	
	\subsection{Complex-valued Attention} 
	\label{CVA} 
	The attention mechanisms are widely used in various areas of deep learning, but attention for the complex domain have rarely been addressed. A significant reason is that the attention has to utilize the softmax function to calculate the similarity coefficient, which must be real numbers rather than complex numbers. 
	% \begin{lemma}\label{cr}
	%     $\forall g: \mathbb{C} \rightarrow \mathbb{R}$, $g$ is analytic if and only if $g$ is a constant function.
	% \end{lemma}
	% \begin{proof}
	% Assume a function $g \in \{\mathbb{C} \rightarrow \mathbb{R}\}$ is analytic. Then $g$ has to satisfy the Cauchy-Riemann equations:
	% \begin{equation}
	%     \left \{
	% \begin{aligned}
	% &\dfrac{\partial \Re (g(\mathbf{x}))}{\partial \Re(\mathbf{x})} = \dfrac{\partial \Im(g(\mathbf{x}))}{\partial \Im(\mathbf{x})} =  \dfrac{\partial \ 0}{\partial \Im(\mathbf{x})}=0\\
	% &\dfrac{\partial \Re(g(g(\mathbf{x})))}{\partial \Im(\mathbf{x})} = -\dfrac{\partial \Im(g(\mathbf{x}))}{\partial \Re(\mathbf{x})} = - \dfrac{\partial \ 0}{\partial \Re(\mathbf{x})}= 0 
	% \end{aligned}
	% \right.
	% \end{equation}
	% where $\mathbf{x}$ is the complex input vector. Since the partial derivatives of $g$ are all equal to 0, $g$ is a constant function. This completes the proof.
	% \end{proof}
	According to Lemma \ref{cr}, it is a constant function or a non-analytic function. However, the constant functions are useless and discardable in neural networks, while non-analytic functions cannot be derived at arbitrary points in complex domain. As a result, we had to utilize the complex gradient vector.
	
	\textbf{Complex gradient vector} \ If $\hat{g}$ is the real function of a complex vector $\mathbf{x}$, then the complex gradient vector is given by \cite{hjorungnes2011complex}:
	\begin{equation}
		\begin{aligned}\label{cgv}
			\nabla \hat{g}(\mathbf{x}) &= 2\dfrac{d \hat{g}(\mathbf{x})}{d \mathbf{x}^*}=\dfrac{d \hat{g}(\mathbf{x})}{\partial \Re(\mathbf{x}^*)} + j \dfrac{d \hat{g}(\mathbf{x})}{\partial \Im(\mathbf{x}^*)}.
		\end{aligned}
	\end{equation}
	%where $\mathbf{x}^*$ denotes the conjugate vector of $\mathbf{x}$. 
	
	\textbf{Complex-valued softmax function} Under \eqref{cgv}, we are able to define the generalized complex-valued softmax function as:
	\begin{equation}
		\textrm{C}_{sf}(\mathbf{x}) = \textrm{R}_{sf}(\hat{g}(\mathbf{x}))
	\end{equation}
	where $\textrm{R}_{sf}(\cdot)$ denotes the softmax function in real case and $\hat{g}(\cdot)$ denotes any function that maps complex numbers to real numbers, such as $abs(\cdot)$ (i.e., the magnitude of the complex numbers), $\Re(\cdot)$, and $\Im(\cdot)$, etc.
	
	Given a complex matrix $\mathbf{X}$, we can compute the complex matrix $\mathbf{Q}$, $\mathbf{K}$ and $\mathbf{V}$ using linear transformations, which are similar to complex-valued fully connected layers. Then the complex-valued attention can be written as: %supplymentary materials 
	\begin{equation}
		\begin{aligned}
			&\qquad \textrm{C}_{a}(\mathbf{Q},\mathbf{K},\mathbf{V}) = \textrm{C}_{sf} \left(\dfrac{\mathbf{Q}\mathbf{K}^T}{\sqrt{d_k}}\right)\mathbf{V}\\
			&=\textrm{C}_{sf}\left(\dfrac{(\Re(\mathbf{Q})\Re(\mathbf{K})^T-\Im(\mathbf{Q})\Im(\mathbf{K})^T)+j(\Re(\mathbf{Q})\Im(\mathbf{K})^T+\Im(\mathbf{Q})\Re(\mathbf{K})^T)}{\sqrt{d_k}}\right)\Re(\mathbf{V})\\
			& \quad + j \ \textrm{C}_{sf}\left(\dfrac{(\Re(\mathbf{Q})\Re(\mathbf{K})^T-\Im(\mathbf{Q})\Im(\mathbf{K})^T)+j(\Re(\mathbf{Q})\Im(\mathbf{K})^T+\Im(\mathbf{Q})\Re(\mathbf{K})^T)}{\sqrt{d_k}}\right)\Im(\mathbf{V})
		\end{aligned}
	\end{equation}
	where $\textrm{C}_{sf}(\cdot)$ acts on each row of the matrix and $d_k$ denotes the row dimension of $\mathbf{K}$ i.e. scaling factor.
	
	\textbf{Complex-valued multi-head attention} Complex-valued multi-headed attention allows models to jointly focus on information from different representations. 
	\begin{equation}
		\textrm{C}_{mha}(\mathbf{Q},\mathbf{K},\mathbf{V}) = Concat(\{\textrm{C}_{a}(\mathbf{QW^Q_k},\mathbf{KW^K_k},\mathbf{VW^V_k})\}^n_{k=1})\mathbf{W^O}
	\end{equation}
	where $\mathbf{W^Q_k}, \mathbf{W^K_k}, \mathbf{W^V_k}$, and $\mathbf{W^O}$ are the projection matrices and $Concat(\cdot)$ denotes the concatenation of inputs matrices.
	
	\textbf{Complex-valued normalization} Normalization, such as batch normalization \cite{ioffe2015batch} and layer normalization \cite{ba2016layer}, is an important component of neural networks. Especially, the batch normalization is commonly employed. However, for a complex vector $\mathbf{x}$, its variance, which has to be computed in normalization, is real. According to Lemma \ref{cr}, the variance is non-analytic. Therefore, in the back-propagation of complex-valued normalization, we have to utilize the complex gradient vector \eqref{cgv}.
	Define $\gamma$ as the complex scaling parameters and $\kappa$ as the complex shift parameters, the complex-valued normalization can be expressed as:
	\begin{equation}
		\begin{aligned}
			& \textrm{C}_{n}(\mathbf{x}) = \gamma \left( \text{Var}[\mathbf{x}]\right)^{-\frac{1}{2}} \left(\mathbf{x}-  \mathbb{E}[\mathbf{x}]\right) + \kappa\\
			&\text{Var}[\mathbf{x}] =  \mathbb{E}\{[\mathbf{x}-\mathbb{E}[\mathbf{x}]][\mathbf{x} -\mathbb{E}[\mathbf{x}]]^H\}
		\end{aligned}
	\end{equation}
	where $\mathbb{E}[\cdot]$ and $\text{Var}[\cdot]$ denote the expectation and variance, respectively, and $[\mathbf{x}]^H$ denotes the conjugate transpose of $\mathbf{x}$.
	
	\begin{figure}[t]
		\centering
		\includegraphics[width= 13.5 cm]{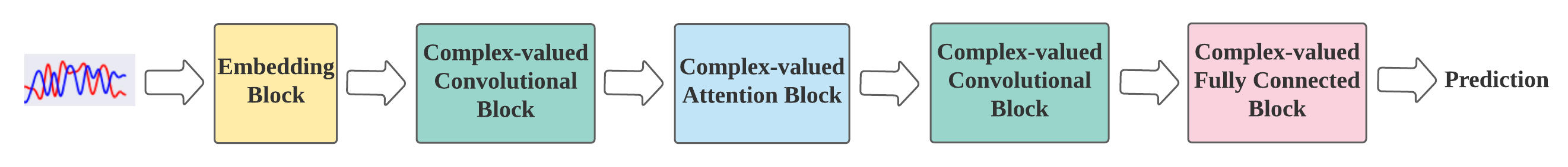}
		\caption{The architecture of CAMEL. The embedding block contains $1 \times 1$ complex-valued convolutional layers. The complex-valued convolutional block contains $3 \times 1$ complex-valued convolutional layer, complex-valued batch normalization, and complex-valued ReLU. The complex-valued attention block contains complex-valued attentions. The complex-valued fully connected block contains complex-valued fully connected layers, complex-valued batch normalization, and complex-valued ReLU.}
		\label{architecture}
	\end{figure}
	
	\textbf{Complex-valued activation function} \ The activation function is nonlinear, so that it is scarcely to be analytic. Most of the well-known activation functions are not analytic in the complex domain, such as Sigmoid, Tanh, and ReLU \cite{goodfellow2016deep}, etc. %引用
	Especially, the complex Sigmoid and Tanh is not bounded while in complex ReLU the complex numbers cannot be compared with zero.
	To this end, the complex-valued activation function can be defined as:
	\begin{equation} \label{AF}
		\textrm{C}_{af}(\mathbf{x}) = \textrm{R}_{af}(\Re(\mathbf{x})) + j \ \textrm{R}_{af}(\Im(\mathbf{x}))
	\end{equation}
	where $\textrm{R}_{af}(\cdot)$ denotes the activation function in real case. In this way, the $\textrm{C}_{Sigmoid}$ and $\textrm{C}_{Tanh}$ are bounded because the real and imaginary parts of them are bounded. Meanwhile, the complex $\textrm{C}_{ReLU}$ can be compared with zero because the real and imaginary parts of inputs can be compared with zero. However, since the complex-valued activation functions defined above are non-analytic in most cases, the complex chain rule is required for derivatives. 
	
	Please see the supplementary material for detailed complex-valued convolutional layer and complex-valued fully connected layer in Section \ref{CVNN}.
	
	\section{Convergence of CAMEL}
	In this section, we will show the convergence behavior of complex-valued MAML by following the previous work \cite{fallah2020convergence} in proving the convergence MAML in the real domain. To prove the complex-valued MAML, we need to utilize twice continuously differentiable, $L_i$-smooth, $\rho_i$-Lipschitz  continuous, and Hessian, etc. in complex domain. Please see the supplementary material for detailed Assumptions, Lemma, and proof of Theorem 1 in Section \ref{Proof of Theorem 1}. 
	\begin{theorem} 
		Suppose that Assumptions \ref{assumption 1}-\ref{assumption 5} hold and $\alpha\in(0, \frac{1}{6L}]$. Consider running complex-valued MAML with batch sizes $B\geq 20$ and $|Q|\geq \lceil 2\alpha^2\sigma_H^2 \rceil$. Following the definition in Lemma \ref{5}, let $\beta_k=\tilde{\beta}(\bm{\theta}_k)/12$. Then for any $\epsilon > 0$, complex-valued MAML finds a solution that 
		\begin{equation} \label{e}
			\mathbb{E}[||\nabla \mathcal{L}_{meta}(\bm{\theta}_\epsilon)||] \leq \max{\left\{ \sqrt{61(1+\frac{\rho\alpha}{L}\sigma)(\frac{\sigma^2}{B} + \frac{\tilde{\sigma}^2}{B|Q|} + \frac{\tilde{\sigma}^2}{|S|})}, \frac{61\rho\alpha}{L}(\frac{\sigma^2}{B} + \frac{\tilde{\sigma}^2}{B|Q|} + \frac{\tilde{\sigma}^2}{|S|}), \epsilon \right\}}
		\end{equation}
		after running for
		\begin{equation}
			\mathcal{O}(1)\Delta\min\left\{ \frac{L+\rho\alpha(\sigma+\epsilon)}{\epsilon^2}, \frac{LB}{\sigma^2} + \frac{L(B|Q|+|S|)}{\tilde{\sigma}^2} \right\}
		\end{equation}
		iterations, where $\Delta$ is defined in Assumption \ref{assumption 1} and $|S|$ and $|Q|$ denotes the size of the support set and query set, respectively.
	\end{theorem}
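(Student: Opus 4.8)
The plan is to lift the complex-valued MAML recursion \eqref{ou} to an equivalent real-variable problem in $(\Re\bm\theta,\Im\bm\theta)$ via Wirtinger calculus, and then reproduce, term by term, the nonconvex stochastic analysis of \cite{fallah2020convergence} for real MAML, carefully tracking the extra conjugate quantities produced by non-analyticity. By \eqref{cgv} the update direction in \eqref{ou} is, up to the factor $2$, the conjugate Wirtinger derivative, so $\|\nabla\mathcal{L}_{meta}(\bm\theta)\|$ coincides with the Euclidean norm of the gradient of $\mathcal{L}_{meta}$ regarded as a function of the $2n$ real coordinates; establishing the bound on $\mathbb{E}[\|\nabla\mathcal{L}_{meta}(\bm\theta_\epsilon)\|]$ is therefore the same as proving an approximate first-order stationarity guarantee for SGD on that real function.

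First I would establish smoothness of the meta-objective. Starting from the closed form of $\nabla\mathcal{L}_{meta}$ in Lemma \ref{lemma4}, and using Assumptions \ref{assumption 1}--\ref{assumption 5} (twice continuous differentiability, $L_i$-smoothness, $\rho_i$-Lipschitz Hessians in the complex domain), I would show that the inner map $\bm\theta\mapsto\bm\theta'_i=\bm\theta-\alpha\nabla_{\bm\theta}\mathcal{L}_{S_i}(\bm\theta)$ is $(1+\alpha L)$-Lipschitz, hence $\tfrac{7}{6}$-Lipschitz for $\alpha\le\tfrac{1}{6L}$, and that consequently $\mathcal{L}_{meta}$ has Lipschitz Wirtinger gradient with parameter of order $L$; this is also where the local smoothness surrogate $\tilde\beta(\bm\theta_k)$ of Lemma \ref{5} and the rule $\beta_k=\tilde\beta(\bm\theta_k)/12$ enter. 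Along the way I would bound the two operators in \eqref{H}: $\|\mathbf{H}^i_{\bm\theta\bm\theta}\|$ and $\|\mathbf{H}^i_{\bm\theta^*\bm\theta}\|$ are each controlled by $L_i$, since together they assemble the full real Hessian of $\mathcal{L}_{S_i}$ in $(\Re\bm\theta,\Im\bm\theta)$, whose operator norm is $L_i$.

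Next I would control the stochastic meta-gradient, which replaces the expectation over $p(\mathcal{T})$ by an average over $B$ tasks, the inner gradient by its empirical version on $S_i$, the outer gradients $\nabla_{\bm\theta'_i}\mathcal{L}_{Q_i}$ and $\nabla_{(\bm\theta'_i)^*}\mathcal{L}_{Q_i}$ by empirical versions on $Q_i$, and the operators $\mathbf{H}^i_{\bm\theta\bm\theta},\mathbf{H}^i_{\bm\theta^*\bm\theta}$ by empirical Hessians. Following \cite{fallah2020convergence} I would split the estimation error into a bias part (because $\bm\theta'_i$ uses a finite sample $S_i$ and a product of dependent empirical quantities appears) and a zero-mean fluctuation. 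Using $|Q|\ge\lceil 2\alpha^2\sigma_H^2\rceil$ to keep the Hessian-sampling error small and $B\ge20$ to absorb cross terms, the bias is $\mathcal{O}\!\left(\tfrac{\rho\alpha}{L}\left(\tfrac{\sigma^2}{B}+\tfrac{\tilde\sigma^2}{B|Q|}+\tfrac{\tilde\sigma^2}{|S|}\right)\right)$ and the second moment of the fluctuation is $\mathcal{O}\!\left(\tfrac{\sigma^2}{B}+\tfrac{\tilde\sigma^2}{B|Q|}+\tfrac{\tilde\sigma^2}{|S|}\right)$; the conjugate term $-\alpha\mathbf{H}^i_{\bm\theta^*\bm\theta}\nabla_{(\bm\theta'_i)^*}\mathcal{L}_{Q_i}$ feeds into both but with the same $L_i,\rho_i,\sigma,\tilde\sigma,\sigma_H$ constants, changing only absolute constants. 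Plugging the smoothness and bias/variance bounds into the descent inequality $\mathbb{E}[\mathcal{L}_{meta}(\bm\theta_{k+1})]\le\mathbb{E}[\mathcal{L}_{meta}(\bm\theta_k)]-\tfrac{\beta_k}{4}\mathbb{E}\|\nabla\mathcal{L}_{meta}(\bm\theta_k)\|^2+(\text{bias}^2+L\beta_k^2\cdot\text{variance})$, telescoping over $k=0,\dots,N-1$, using $\mathcal{L}_{meta}(\bm\theta_0)-\inf\mathcal{L}_{meta}\le\Delta$ from Assumption \ref{assumption 1}, and taking $\bm\theta_\epsilon$ to be the iterate of smallest gradient norm, then yields \eqref{e}: the adaptive step $\beta_k=\tilde\beta(\bm\theta_k)/12\asymp 1/(L+\rho\alpha(\sigma+\epsilon))$ gives the branch $(L+\rho\alpha(\sigma+\epsilon))/\epsilon^2$, the large-batch regime in which the variance itself already falls below $\epsilon^2$ gives the branch $LB/\sigma^2+L(B|Q|+|S|)/\tilde\sigma^2$, and the $\max$ in \eqref{e} records the irreducible bias/variance floor.

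I expect the main obstacle to be the conjugate-Hessian term $\mathbf{H}^i_{\bm\theta^*\bm\theta}$, which has no analogue in real MAML: one must verify it is well defined and bounded under the complex versions of the assumptions, that retaining it does not spoil the descent inequality (after the Cauchy--Schwarz splits the true complex meta-gradient must still appear with a strictly positive coefficient), and that its empirical surrogate has bias and variance of the same order as the principal term. A secondary point is checking that the non-analytic components of the architecture --- the complex softmax built from $\hat g$, the complex normalization variance, and the split activation \eqref{AF} --- yield losses $\mathcal{L}_{S_i},\mathcal{L}_{Q_i}$ that are genuinely twice continuously differentiable in $(\Re\bm\theta,\Im\bm\theta)$ with the claimed Lipschitz Hessians, so that Assumptions \ref{assumption 1}--\ref{assumption 5} are substantive rather than vacuous.
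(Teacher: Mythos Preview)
Your proposal is correct in spirit and would work, but it takes a more laborious route than the paper. Both arguments lift the complex problem to a $2m$-dimensional one and then appeal to \cite{fallah2020convergence}; the difference is in \emph{which} coordinates are used and how much of the real-MAML analysis must be redone.

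You work in the real coordinates $(\Re\bm\theta,\Im\bm\theta)$ and propose to reproduce the descent/bias/variance analysis of \cite{fallah2020convergence} term by term, treating the conjugate block $\mathbf{H}^i_{\bm\theta^*\bm\theta}$ as an extra piece with ``no analogue in real MAML'' that must be bounded separately and checked not to spoil the descent inequality. The paper instead passes to the augmented (Wirtinger) coordinates $\bm\phi=(\bm\theta,\bm\theta^*)^\top$ and computes $G_i(\bm\phi)=\bigl(\mathbf I-\alpha\nabla_{\bm\phi}^2 f_i(\bm\phi)\bigr)\nabla_{\bm\phi'_i}f_i(\bm\phi'_i)$, which is \emph{identical in form} to the real MAML outer-loop gradient: the two blocks $\mathbf{H}^i_{\bm\theta\bm\theta}$ and $\mathbf{H}^i_{\bm\theta^*\bm\theta}$ (together with their conjugates) are simply the four blocks of the single Hessian $\nabla_{\bm\phi}^2 f_i$. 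Since Assumptions~\ref{assumption 1}--\ref{assumption 5} are already stated for this augmented Hessian, the entire machinery of \cite{fallah2020convergence} applies verbatim and the constants carry over without further bookkeeping. What you flag as the ``main obstacle'' therefore dissolves after the right change of variables; your approach would recover the same bounds but only after redoing the smoothness, bias, and variance estimates by hand, with the attendant risk of losing exact constants like the $61$ in \eqref{e}. Your secondary concern about whether the non-analytic architectural components actually satisfy the assumptions is legitimate but orthogonal to the theorem, which simply takes Assumptions~\ref{assumption 1}--\ref{assumption 5} as hypotheses.
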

	The result in Theorem 1 demonstrates that after running CAMEL for $\mathcal{O}(\frac{1+\rho(\sigma+\epsilon)/6}{\epsilon^2} + \frac{B}{\sigma^2} + \frac{(B|Q|+|S|)}{\tilde{\sigma}^2})$ iterations, we are able to find a point $\bm{\theta^\dag}$ at which the expected gradient norm $\mathbb{E}[||\nabla \mathcal{L}_{meta}(\bm{\theta^\dag})||]$ satisfies \eqref{e}.
	
	\section{Experiments}
	%to do 
	We train the model on 3 datasets: RadioML 2016.10A \cite{o2016convolutional}, a dataset with 220,000 total samples, 20,000 samples for each class and 11,000 samples for each SNR, consists of $2\times128$ dimension input X in 11 classes. The 11 classes correspond to 11 modulation types: 8PSK, AM-DSB, AM-SSB, BPSK, CPFSK, GFSK, PAM4, QAM16, QAM64, QPSK, WBFM. And RadioML 2016.04C \cite{o2016convolutional}, a synthetic dataset, is generated with GNU Radio, consisting of about 110 thousand signals. These samples are uniformly distributed in SNR from -20dB to +20dB and tagged so that we can evaluate performance on specific subsets. Actually 2016.10A represents a cleaner and more normalized version of the 2016.04C dataset. The third one is SIGNAL2020.02 \cite{9392373}, whose data is modulated at a rate of 8 samples per symbol, while 128 samples per frame, with 20 different SNRs, even values between [2dB, 40dB].
	
	\subsection{Experimental setup}
	
	The CAMEL is implemented in Pytorch \cite{paszke2019pytorch} with python on a RTX3090 Graphics Processing Units, and trained using the Adam optimizer \cite{kingma2014adam}. In the classification experiments of three datasets, RadioML 2016.04C, RadioML 2016.10A and SIGNAL 2020.02, the default hyper-parameters are as follows: the training epochs are 400,000; the meta batch size is 2; the meta-level outer learning rate is 0.001 and the task-level inner update learning rate is 0.1; the task-level inner update step is 5 and the update step for fine-tuning is 10. All of our experiments use the same hyper-parameter as the default setting. We change the support set shot number in 1 and 5 to have different results of 5-way 1-shot case and 5-way 5-shot case. 
	% We tried different activation functions in our model, compared the effects and choose the complex RELU activation function at last. 
	
	\subsection{Our Model}
	
	First, we study the influence of adding a multi-head self attention mechanism in this network, which can focus attention on important information. We perform a multi-head attention with 8 heads. Instead of performing a single attention function with input $d_x$-dimensional keys, values and queries, it is found beneficial to linearly project the queries, keys and values $h$ times with different, learned linear projections to $d_k$, $d_q$, $d_v$ dimensions, respectively. Then perform the attention function in parallel, concatenate the outputs and do the projection again to get the final result \cite{NEURIPS2019_876e8108}. In our experiments, as illustrated in Table~\ref{table1}, the performance is much better with the addition of the multi-head attention mechanism. As the batch size increases, the performance improves while increasing computation and time-consuming. To make a trade-off, we set the batch size to be 64 when using multi-head attention. We observe that the model with attention mechanism demonstrates a greater ability to increase the accuracy owing to various improvements.
	
	\begin{table}
		\caption{CAMEL: compare with other meta-learning models on datasets RADIOML 2016.10A and SIGNAL 2020.02. The method `MAML+attention' indicates adding multi-head attention mechanism on the origin MAML model, `MAML+complex' means constructing complex-valued neural network in the MAML model and `CT \cite{yang2020complex}' represents the Complex Transformer model using 8 attention functions to represent the complex-valued attention. The $\pm$ shows 95\% confidence intervals over tasks.}
		\label{table1}
		\centering
		\footnotesize{
			\begin{tabular}{lllll}
				\toprule
				&\multicolumn{1}{c}{RADIOML 2016.10A} & \multicolumn{3}{c}{SIGNAL2020.02}   \\
				\cmidrule(r){2-5}
				
				Method & 1-shot & 5-shot & 1-shot & 5-shot\\
				\midrule
				MAML \cite{finn2017model}  & 86.57\% & 94.50\% & 43.26\% & 67.77\% \\
				MAML+attention  & 95.80\% & 97.70\% & 54.44\% & 63.33\% \\
				MAML+complex  & 91.40\% & 96.38\% & 59.50\% & 64.00\%\\
				SNAIL \cite{mishra2018simple}  & 71.18\% & 78.48\% &35.01\% & 36.34\%\\
				Reptilec \cite{nichol2018firstorder}  & 69.16\% & 92.32\% &55.01\% & 69.39\% \\
				MAML+complex+CT \cite{yang2020complex}  & 96.40\% & 97.50\% & 58.40\% & 69.80\%\\
				\textbf{CAMEL (ours)}  & \textbf{97.23\%$\pm$0.13\%} & \textbf{98.22\%$\pm$0.08\%} & \textbf{64.80\%$\pm$0.10\%} & \textbf{74.27\%$\pm$0.15\%} \\
				\bottomrule
		\end{tabular}}
	\end{table}
	
	\begin{table}
		\caption{CAMEL: compare with other meta-learning models in detail on the dataset RADIOML 2016.04C. The $\pm$ shows 95\% confidence intervals over tasks. CAMEL outperforms all other meta-learning models listed.}
		\label{table2}
		\centering
		\begin{tabular}{lll}
			\toprule
			& \multicolumn{1}{c}{RADIOML 2016.04C}  \\
			\cmidrule(r){2-3}
			
			Method & 1-shot & 5-shot\\
			\midrule
			MAML \cite{finn2017model} & 88.93\%$\pm$0.13\% & 93.59\%$\pm$0.62\%  \\
			MAML+attention & 92.12\%$\pm$0.22\% & 95.51\%$\pm$0.05\%  \\
			MAML+complex & 91.65\%$\pm$0.35\% &96.28\%$\pm$0.53\% \\
			SNAIL \cite{mishra2018simple} & 89.21\%$\pm$0.75\% & 96.90\%$\pm$0.19\% \\
			Reptile \cite{nichol2018firstorder} & 87.08\%$\pm$2.88\% & 92.07\%$\pm$5.65\%  \\
			MAML+complex+CT \cite{yang2020complex} & 93.58\%$\pm$1.15\% & 96.52\%$\pm$0.08\% \\
			\textbf{CAMEL (ours)} & \textbf{96.30\%$\pm$0.22\%} & \textbf{97.51\%$\pm$0.15\%} \\
			\bottomrule
		\end{tabular}
	\end{table}
	
	Further study concerns the influence of adding a complex-valued neural network, because we notice that complex numbers could have a richer representational capacity. For these signals inputs, using complex number can probably obtain more useful details than real numbers and could also facilitate noise-robust memory retrieval mechanisms \cite{trabelsi2018deep}. We need to deal with the complex building blocks to construct a complex number neural network: representing of complex numbers, Complex gradient vectors, complex weight initialization, complex convolutions, complex-valued activation,  complex-valued normalization and complex-valued multi-head attention mechanism. These blocks are determined by their own algorithm and the algorithm of complex numbers. We figure out from the results in Table~\ref{table1} and Table~\ref{table2} that this complex features improve the classification accuracy in both 5-way 1-shot and 5-way 5-shot cases with different datasets.
	
	In the training process, we adjust the number of convolution kernels to 128. For the multi-head attention part, we set the source sequence length and output sequence length to 64, number of heads to 8. We observe that such complex-valued models are more competitive than their real valued counterparts. These build our final model: CAMEL, Model-Agnostic Meta-Learning with features of multi-head attention and complex-valued neural network. Compared with the other meta-learning models, CAMEL achieves the best classification accuracy.
	
	The Complex Transformer \cite{yang2020complex} implements complex attention in another way: It rewrites all complex functions into two separate real functions and computes the multiplication of queries, keys and values to get the complex attention with 8 attention functions having different inputs. We also conduct SNAIL \cite{mishra2018simple}, which combines a casual attention operation over the context produced by temporal convolutions, and Reptile \cite{nichol2018firstorder}, which uses only first-order derivatives for meta-learning updates. To have a comparison, Table~\ref{table1} and Table~\ref{table2} list the accuracies of several models based on MAML applied on different datasets. Results in thses two tables demonstrate that our model CAMEL have the state-of-the-art performance among all. In particular, some models are not well performed on the task in the dataset SIGNAL2020.02, but our model CAMEL still has a stable and great performance on this challenging task. Figure~\ref{fig3} indicates that CAMEL could get the highest accuracy at a relatively fast convergence speed. The results also show that, on these challenging signal classification tasks, the CAMEL model apparently outperforms other meta-learning models in accuracy and stability, which could be figured out from the smooth accuracy curves and narrow confidence intervals for CAMEL model in both 1 shot and 5 shot cases.
	
	\begin{figure}[htbp]
		\subfigure %第一张子图
		{
			\begin{minipage}{6.8cm}
				\centering          %子图居中
				\includegraphics[scale=0.45]{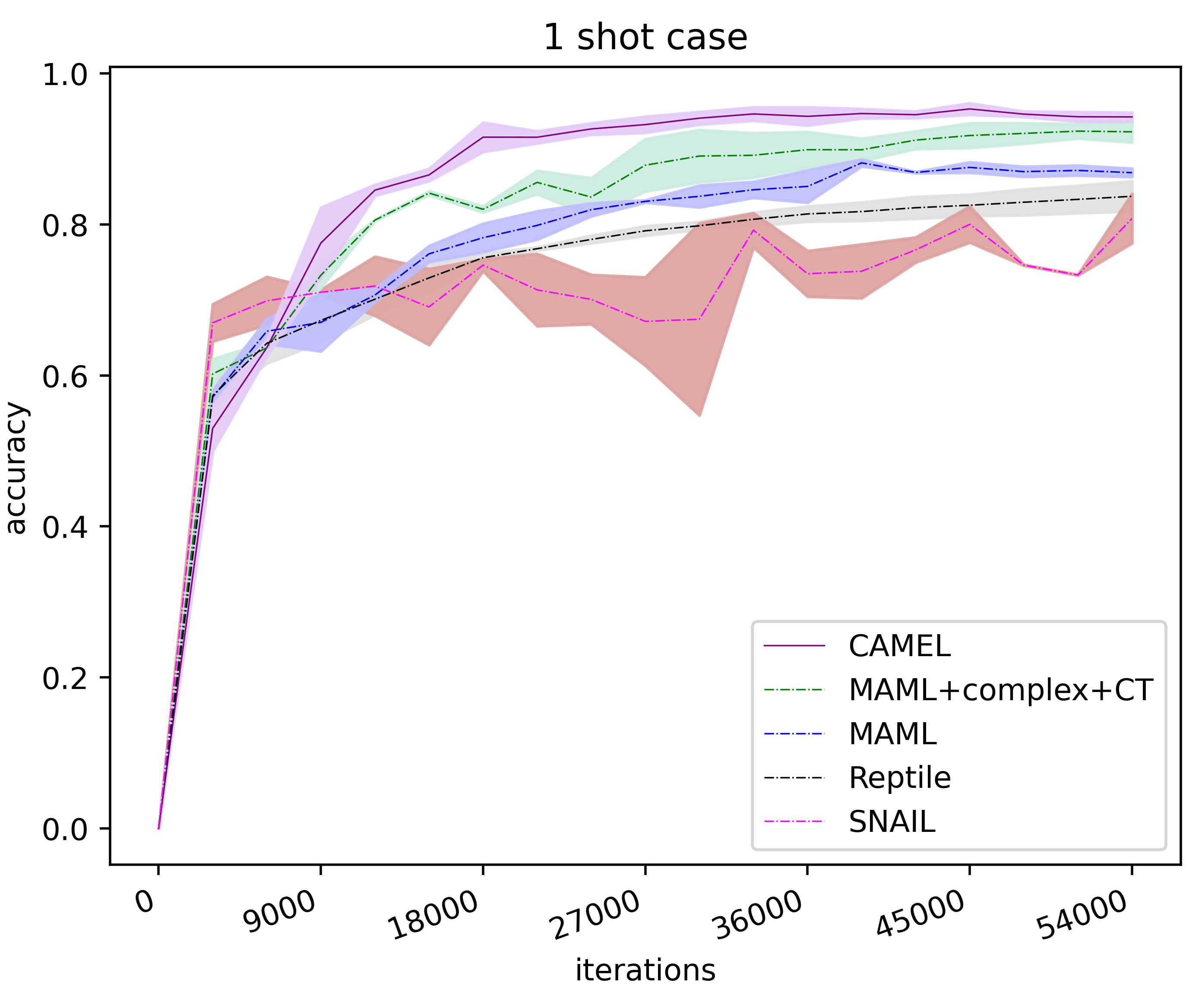}   %以pic.jpg的0.5倍大小输出
			\end{minipage}
		}
		\subfigure %第二张子图
		{
			\begin{minipage}{6.8cm}
				\centering      %子图居中
				\includegraphics[scale=0.45]{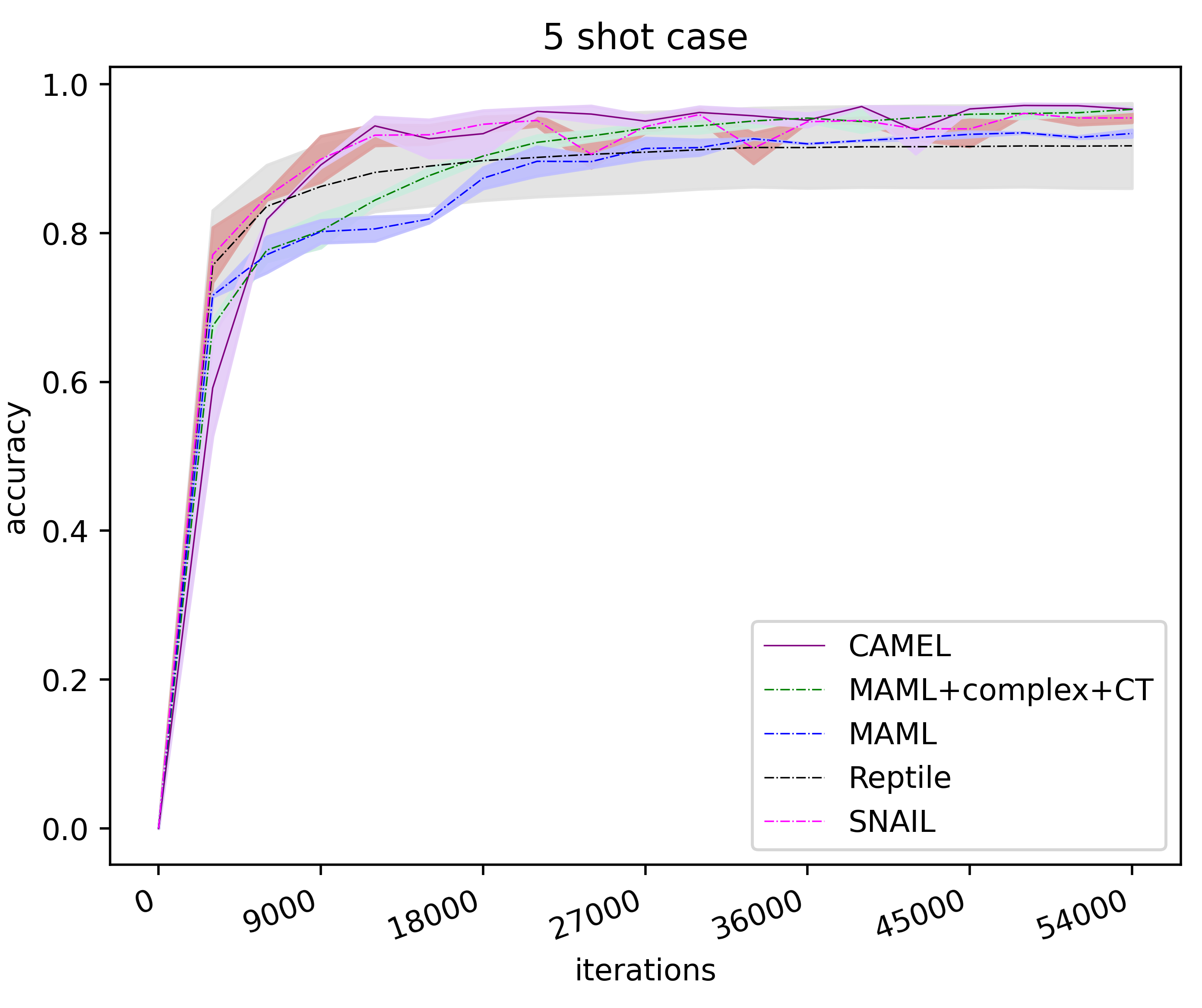}   %以pic.jpg的0.5倍大小输出
			\end{minipage}
		}
		
		\caption{CAMEL: compare the convergence curves of accuracy with other meta-learning models for classification tasks on the dataset RADIOML 2016.04C. This pair of images shows the accuracy curves at 95\% confidence interval over the same classification task. Left: 5-way 1-shot learning. Right: 5-way 5-shot learning. The results indicate that our model CAMEL reaches the highest accuracy at a relatively fast convergence rate. } %  %大图名称
		\label{fig3}  %图片引用标记
	\end{figure}
	
	\subsection{Ablation study}
	
	In this section, we have conducted the ablation studies on CAMEL in three scenarios, as shown in Table~\ref{table3}. The first scenario uses samples whose SNR $\geq$ 0, of which 75\% is selected as the training set and 25\% is selected as the test set. For the second scenario, showed in the column "SNR = 0" in Table~\ref{table3}, we pick samples with SNR=0 and randomly select 75\% of them to form the training set and 25\% of them as the test set. The third scenario forms the (Prediction-Other) P-O set as follow: pick 5 classes of signal samples (SNR $\geq$ 0) as set P and the rest 5 classes of samples (SNR $\geq$ 0) form set O. Pick all samples in set O and 5\% of samples in set P as the training set. The remaining 95\% of samples in set P constitute the test set.
	
	On 3 training and testing sets mentioned above, we construct the MAML model first, and then add some features on it step by step. We add attention components and complex numbers separately and together. From the results we observe that in CAMEL, all the features added on the original MAML model help improve the classification accuracy.
	
	\begin{table}
		\caption{Ablation study on CAMEL in three scenarios. For the MAML, add multi-head attention mechanism and use complex numbers step by step. Our model CAMEL combines MAML, complex-valued neural network and complex-valued multi-head attention component. The total experiment consists of 4 models on 3 training and test sets.}
		\label{table3}
		\centering
		\begin{tabular}{lllll}
			\toprule
			Accurancy & SNR $\geq$ 0 & SNR = 0 & P-O set \\
			\midrule
			MAML & 87.20\% & 81.64\% & 89.06\% \\
			MAML+attention & 93.00\% & 87.26\% & 91.90\% \\
			MAML+complex & 91.10\% & 91.75\% &91.30\% \\
			\textbf{CAMEL (ours)} & \textbf{93.70\%} & \textbf{92.10\%} & \textbf{96.30\%} \\
			\bottomrule
		\end{tabular}
	\end{table}
	
	\section{Conclusion}
	In this paper, we have proposed a complex domain attentional meta-learning framework for signal recognition named CAMEL. CAMEL utilizes complex-valued neural networks and attention to provide prior knowledge, i.e., complex domain and temporal information, which helps CAMEL improve performance and prevent overfitting. As two byproducts of CAMEL, we have designed the complex-valued meta-learning and complex-valued attention, which can be of independent interest. With second-order information, CAMEL is able to find first-order stationary points of general nonconvex problems. Furthermore, CAMEL has achieved the state-of-the-art results on extensive datasets. Finally, the ablation studies in three scenarios have demonstrated the effectiveness of the components of CAMEL.
	%to do
	\newpage
	
	\bibliographystyle{ieeetr}
	\bibliography{paper}

\begin{thebibliography}{10}

\bibitem{o2016convolutional}
T.~J. O’Shea, J.~Corgan, and T.~C. Clancy, ``Convolutional radio modulation
  recognition networks,'' in {\em International Conference on Engineering
  Applications of Neural Networks}, pp.~213--226, Springer, 2016.

\bibitem{NIPS2017_6f2268bd}
M.~Jas, T.~Dupr\'{e}~la Tour, U.~Simsekli, and A.~Gramfort, ``Learning the
  morphology of brain signals using alpha-stable convolutional sparse coding,''
  in {\em Advances in Neural Information Processing Systems}, vol.~30, 2017.

\bibitem{9234613}
X.~Song, L.~Chai, and J.~Zhang, ``Graph signal processing approach to qsar/qspr
  model learning of compounds,'' {\em IEEE Transactions on Pattern Analysis and
  Machine Intelligence}, pp.~1--1, 2020.

\bibitem{9383106}
Y.~Dong, X.~Jiang, L.~Cheng, and Q.~Shi, ``S{S}{R}{C}{N}{N}: A semi-supervised
  learning framework for signal recognition,'' {\em IEEE Transactions on
  Cognitive Communications and Networking}, pp.~1--1, 2021.

\bibitem{finn2017model}
C.~Finn, P.~Abbeel, and S.~Levine, ``Model-agnostic meta-learning for fast
  adaptation of deep networks,'' in {\em International Conference on Machine
  Learning}, pp.~1126--1135, PMLR, 2017.

\bibitem{NEURIPS2018_8e2c381d}
C.~Finn, K.~Xu, and S.~Levine, ``Probabilistic model-agnostic meta-learning,''
  in {\em Advances in Neural Information Processing Systems}, vol.~31, 2018.

\bibitem{yoon2018bayesian}
J.~Yoon, T.~Kim, O.~Dia, S.~Kim, Y.~Bengio, and S.~Ahn, ``Bayesian
  model-agnostic meta-learning,'' in {\em Advances in Neural Information
  Processing Systems}, pp.~7343--7353, 2018.

\bibitem{zhang2018metagan}
R.~Zhang, T.~Che, Z.~Ghahramani, Y.~Bengio, and Y.~Song, ``Metagan: An
  adversarial approach to few-shot learning.,'' {\em Advances in Neural
  Information Processing Systems}, vol.~2, p.~8, 2018.

\bibitem{balaji2018metareg}
Y.~Balaji, S.~Sankaranarayanan, and R.~Chellappa, ``Metareg: Towards domain
  generalization using meta-regularization,'' {\em Advances in Neural
  Information Processing Systems}, vol.~31, pp.~998--1008, 2018.

\bibitem{NEURIPS2019_92262bf9}
S.~Liu, A.~Davison, and E.~Johns, ``Self-supervised generalisation with meta
  auxiliary learning,'' in {\em Advances in Neural Information Processing
  Systems}, vol.~32, 2019.

\bibitem{NEURIPS2019_fd0a5a5e}
S.~Khodadadeh, L.~Boloni, and M.~Shah, ``Unsupervised meta-learning for
  few-shot image classification,'' in {\em Advances in Neural Information
  Processing Systems}, vol.~32, 2019.

\bibitem{NEURIPS2019_6fe43269}
Y.~Xie, H.~Jiang, F.~Liu, T.~Zhao, and H.~Zha, ``Meta learning with relational
  information for short sequences,'' in {\em Advances in Neural Information
  Processing Systems}, vol.~32, 2019.

\bibitem{NEURIPS2019_b2945042}
F.~Alet, E.~Weng, T.~Lozano-P\'{e}rez, and L.~P. Kaelbling, ``Neural relational
  inference with fast modular meta-learning,'' in {\em Advances in Neural
  Information Processing Systems}, vol.~32, 2019.

\bibitem{NEURIPS2019_7a9a322c}
G.~Jerfel, E.~Grant, T.~Griffiths, and K.~A. Heller, ``Reconciling
  meta-learning and continual learning with online mixtures of tasks,'' in {\em
  Advances in Neural Information Processing Systems}, vol.~32, 2019.

\bibitem{NEURIPS2019_f4aa0dd9}
M.~Khodak, M.-F.~F. Balcan, and A.~S. Talwalkar, ``Adaptive gradient-based
  meta-learning methods,'' in {\em Advances in Neural Information Processing
  Systems}, vol.~32, 2019.

\bibitem{NEURIPS2019_8c235f89}
P.~Zhou, X.~Yuan, H.~Xu, S.~Yan, and J.~Feng, ``Efficient meta learning via
  minibatch proximal update,'' in {\em Advances in Neural Information
  Processing Systems}, vol.~32, 2019.

\bibitem{NEURIPS2019_072b030b}
A.~Rajeswaran, C.~Finn, S.~M. Kakade, and S.~Levine, ``Meta-learning with
  implicit gradients,'' in {\em Advances in Neural Information Processing
  Systems}, vol.~32, 2019.

\bibitem{zhuang2020no}
Z.~Zhuang, Y.~Wang, K.~Yu, and S.~Lu, ``No-regret non-convex online
  meta-learning,'' in {\em ICASSP 2020-2020 IEEE International Conference on
  Acoustics, Speech and Signal Processing (ICASSP)}, pp.~3942--3946, IEEE,
  2020.

\bibitem{NEURIPS2020_3214a6d8}
W.~Kong, R.~Somani, S.~Kakade, and S.~Oh, ``Robust meta-learning for mixed
  linear regression with small batches,'' in {\em Advances in Neural
  Information Processing Systems}, vol.~33, pp.~4683--4696, 2020.

\bibitem{NEURIPS2020_0a716fe8}
G.~Denevi, M.~Pontil, and C.~Ciliberto, ``The advantage of conditional
  meta-learning for biased regularization and fine tuning,'' in {\em Advances
  in Neural Information Processing Systems}, vol.~33, pp.~964--974, 2020.

\bibitem{NEURIPS2020_1e04b969}
Y.~Chen, A.~L. Friesen, F.~Behbahani, A.~Doucet, D.~Budden, M.~Hoffman, and
  N.~de~Freitas, ``Modular meta-learning with shrinkage,'' in {\em Advances in
  Neural Information Processing Systems}, vol.~33, pp.~2858--2869, 2020.

\bibitem{NEURIPS2020_4b86ca48}
H.~Yao, Y.~Zhou, M.~Mahdavi, Z.~J. Li, R.~Socher, and C.~Xiong, ``Online
  structured meta-learning,'' in {\em Advances in Neural Information Processing
  Systems}, vol.~33, pp.~6779--6790, 2020.

\bibitem{NEURIPS2020_ee89223a}
S.~Baik, M.~Choi, J.~Choi, H.~Kim, and K.~M. Lee, ``Meta-learning with adaptive
  hyperparameters,'' in {\em Advances in Neural Information Processing
  Systems}, vol.~33, pp.~20755--20765, 2020.

\bibitem{NEURIPS2020_731c83db}
V.~Sitzmann, E.~Chan, R.~Tucker, N.~Snavely, and G.~Wetzstein, ``Metasdf:
  Meta-learning signed distance functions,'' in {\em Advances in Neural
  Information Processing Systems}, vol.~33, pp.~10136--10147, 2020.

\bibitem{NEURIPS2020_cc3f5463}
J.~Harrison, A.~Sharma, C.~Finn, and M.~Pavone, ``Continuous meta-learning
  without tasks,'' in {\em Advances in Neural Information Processing Systems},
  vol.~33, pp.~17571--17581, 2020.

\bibitem{NEURIPS2020_84c578f2}
K.~Ji, J.~D. Lee, Y.~Liang, and H.~V. Poor, ``Convergence of meta-learning with
  task-specific adaptation over partial parameters,'' in {\em Advances in
  Neural Information Processing Systems}, vol.~33, pp.~11490--11500, 2020.

\bibitem{NEURIPS2020_171ae1bb}
C.~Boutilier, C.-w. Hsu, B.~Kveton, M.~Mladenov, C.~Szepesvari, and M.~Zaheer,
  ``Differentiable meta-learning of bandit policies,'' in {\em Advances in
  Neural Information Processing Systems}, vol.~33, pp.~2122--2134, 2020.

\bibitem{NEURIPS2020_bdbd5ebf}
B.~Confavreux, F.~Zenke, E.~Agnes, T.~Lillicrap, and T.~Vogels, ``A
  meta-learning approach to (re)discover plasticity rules that carve a desired
  function into a neural network,'' in {\em Advances in Neural Information
  Processing Systems}, vol.~33, pp.~16398--16408, 2020.

\bibitem{NEURIPS2020_cfee3986}
M.~Goldblum, L.~Fowl, and T.~Goldstein, ``Adversarially robust few-shot
  learning: A meta-learning approach,'' in {\em Advances in Neural Information
  Processing Systems}, vol.~33, pp.~17886--17895, 2020.

\bibitem{NIPS2017_3f5ee243}
A.~Vaswani, N.~Shazeer, N.~Parmar, J.~Uszkoreit, L.~Jones, A.~N. Gomez, L.~u.
  Kaiser, and I.~Polosukhin, ``Attention is all you need,'' in {\em Advances in
  Neural Information Processing Systems}, vol.~30, 2017.

\bibitem{DBLP:conf/iclr/TrabelsiBZSSSMR18}
C.~Trabelsi, O.~Bilaniuk, Y.~Zhang, D.~Serdyuk, S.~Subramanian, J.~F. Santos,
  S.~Mehri, N.~Rostamzadeh, Y.~Bengio, and C.~J. Pal, ``Deep complex
  networks,'' in {\em 6th International Conference on Learning Representations
  (ICLR)}, OpenReview.net, 2018.

\bibitem{hirose2012complex}
A.~Hirose, {\em Complex-valued neural networks}, vol.~400.
\newblock Springer Science \& Business Media, 2012.

\bibitem{tu2020complex}
Y.~Tu, Y.~Lin, C.~Hou, and S.~Mao, ``Complex-valued networks for automatic
  modulation classification,'' {\em IEEE Transactions on Vehicular Technology},
  vol.~69, no.~9, pp.~10085--10089, 2020.

\bibitem{hochreiter1997long}
S.~Hochreiter and J.~Schmidhuber, ``Long short-term memory,'' {\em Neural
  Computation}, vol.~9, no.~8, pp.~1735--1780, 1997.

\bibitem{chung2014empirical}
J.~Chung, C.~Gulcehre, K.~Cho, and Y.~Bengio, ``Empirical evaluation of gated
  recurrent neural networks on sequence modeling,'' {\em arXiv preprint
  arXiv:1412.3555}, 2014.

\bibitem{fallah2020convergence}
A.~Fallah, A.~Mokhtari, and A.~Ozdaglar, ``On the convergence theory of
  gradient-based model-agnostic meta-learning algorithms,'' in {\em
  International Conference on Artificial Intelligence and Statistics},
  pp.~1082--1092, PMLR, 2020.

\bibitem{yang2020complex}
M.~Yang, M.~Q. Ma, D.~Li, Y.-H.~H. Tsai, and R.~Salakhutdinov, ``Complex
  transformer: A framework for modeling complex-valued sequence,'' in {\em IEEE
  International Conference on Acoustics, Speech and Signal Processing
  (ICASSP)}, pp.~4232--4236, IEEE, 2020.

\bibitem{hjorungnes2011complex}
A.~Hj{\o}rungnes, {\em Complex-valued matrix derivatives: with applications in
  signal processing and communications}.
\newblock Cambridge University Press, 2011.

\bibitem{ioffe2015batch}
S.~Ioffe and C.~Szegedy, ``Batch normalization: Accelerating deep network
  training by reducing internal covariate shift,'' in {\em International
  Conference on Machine Learning}, pp.~448--456, PMLR, 2015.

\bibitem{ba2016layer}
J.~L. Ba, J.~R. Kiros, and G.~E. Hinton, ``Layer normalization,'' {\em arXiv
  preprint arXiv:1607.06450}, 2016.

\bibitem{goodfellow2016deep}
I.~Goodfellow, Y.~Bengio, A.~Courville, and Y.~Bengio, {\em Deep learning},
  vol.~1.
\newblock MIT press Cambridge, 2016.

\bibitem{9392373}
Y.~Dong, X.~Jiang, H.~Zhou, Y.~Lin, and Q.~Shi, ``S{R}{2}{C}{N}{N}: Zero-shot
  learning for signal recognition,'' {\em IEEE Transactions on Signal
  Processing}, vol.~69, pp.~2316--2329, 2021.

\bibitem{paszke2019pytorch}
A.~Paszke, S.~Gross, F.~Massa, A.~Lerer, J.~Bradbury, G.~Chanan, T.~Killeen,
  Z.~Lin, N.~Gimelshein, L.~Antiga, {\em et~al.}, ``Pytorch: An imperative
  style, high-performance deep learning library,'' {\em arXiv preprint
  arXiv:1912.01703}, 2019.

\bibitem{kingma2014adam}
D.~P. Kingma and J.~Ba, ``Adam: A method for stochastic optimization,'' {\em
  arXiv preprint arXiv:1412.6980}, 2014.

\bibitem{NEURIPS2019_876e8108}
Y.~N. Dauphin and S.~Schoenholz, ``Metainit: Initializing learning by learning
  to initialize,'' in {\em Advances in Neural Information Processing Systems}
  (H.~Wallach, H.~Larochelle, A.~Beygelzimer, F.~d\textquotesingle
  Alch\'{e}-Buc, E.~Fox, and R.~Garnett, eds.), vol.~32, Curran Associates,
  Inc., 2019.

\bibitem{mishra2018simple}
N.~Mishra, M.~Rohaninejad, X.~Chen, and P.~Abbeel, ``A simple neural attentive
  meta-learner,'' in {\em International Conference on Learning
  Representations}, 2018.

\bibitem{nichol2018firstorder}
A.~Nichol, J.~Achiam, and J.~Schulman, ``On first-order meta-learning
  algorithms,'' {\em arXiv preprint arXiv:1803.02999}, 2018.

\bibitem{trabelsi2018deep}
C.~Trabelsi, O.~Bilaniuk, Y.~Zhang, D.~Serdyuk, S.~Subramanian, J.~F. Santos,
  S.~Mehri, N.~Rostamzadeh, Y.~Bengio, and C.~J. Pal, ``Deep complex
  networks,'' in {\em International Conference on Learning Representations},
  2018.

\bibitem{zhang2015complex}
S.~Zhang, Y.~Xia, and J.~Wang, ``A complex-valued projection neural network for
  constrained optimization of real functions in complex variables,'' {\em IEEE
  Transactions on Neural Networks and Learning Systems}, vol.~26, no.~12,
  pp.~3227--3238, 2015.

\bibitem{zhang2014convergence}
H.~Zhang, X.~Liu, D.~Xu, and Y.~Zhang, ``Convergence analysis of fully complex
  backpropagation algorithm based on wirtinger calculus,'' {\em Cognitive
  Neurodynamics}, vol.~8, no.~3, pp.~261--266, 2014.

\end{thebibliography}
	
	\newpage
	
	\appendix
	\begin{table}[]
		\caption{Definition of mathematical symbols}
		\centering
		\begin{tabular}{c|c}
			\hline
			$\bm{\theta}$ & Meta-parameters of model \\
			$\alpha$ & Learning rate\\
			$\mathcal{T}_i$ & Task $i$ i.e. $\{S_i, Q_i\}$\\
			$p(\mathcal{T})$ & Task probability distribution \\
			$\beta$ & Meta learning rate\\
			$L$ & $\max_i{L_i}$ \\
			$\rho$ & $\max_i{\rho_i}$\\
			$B$ & Batch size\\
			$\sigma^2$ & The variance of gradient $\nabla f_i(\bm{\theta})$\\
			$\tilde{\sigma}^2$ & The variance of gradient $\nabla f_i(\bm{\theta}, d)$ \\
			$d$ & Random minibatch drawn from the dataset of task $i$\\
			$\beta_k$ & The $k$-th step meta learning rate\\
			$L_i$ & Lipschitz constant of $\nabla f_i$ \\
			$\rho_i$ & Lipschitz constant of $\nabla^2 f_i$ \\
			$f_i$ & Loss function for task $i$ \\
			$F_i$ & The loss after the update of gradient descent \\
			$\mathcal{I}$ & The set of all tasks \\
			$\mathbf{x}$ & Complex-valued input vector \\ 
			$\mathbf{b}$ & Complex-valued bias vector\\
			$\mathcal{L}_{S_i}\left(\bm{\theta}\right)$ & The loss on the support set of task $i$\\
			$\mathcal{L}_{meta}(\bm{\theta})$ & Meta-objective\\
			\hline
		\end{tabular}
		\label{tab:symbol}
	\end{table}
	
	%\mbox{}
	% \nomenclature{$\bm{\theta}$}{Meta-parameters of model}
	% \nomenclature{$f_i$}{Loss function for task $i$}
	% \nomenclature{$F_i$}{The loss after the update of gradient descent}
	% \nomenclature{$\mathcal{I}$}{The set of all tasks}
	% \nomenclature{$L_i$}{Lipschitz constant of $\nabla f_i$}
	% \nomenclature{$\rho_i$}{Lipschitz constant of $\nabla^2 f_i$}
	% \nomenclature{$\mathcal{T}_i$}{Task $i$}
	% \printnomenclature
	
	\section{Proof of Lemma \ref{lemma1}}
	\setcounter{lemma}{0}
	\begin{lemma}
		If a function $g$ is complex analytic, the time complexity of the derivative of $g$ in IQCVNNs are twice that of the complex derivative of $g$ in CDCVNNs. 
	\end{lemma}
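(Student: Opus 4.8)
The plan is to make the informal comparison precise by fixing a cost model --- counting the real arithmetic operations, equivalently the real partial derivatives, that each architecture actually evaluates when it differentiates an analytic $g:\mathbb{C}^n\to\mathbb{C}^m$ --- and then comparing what the two schemes evaluate, block by block. First I would set notation: writing the complex input as $z=\mathbf{x}+j\mathbf{y}$ and the output as $g=\mathbf{u}+j\mathbf{v}$, an IQCVNN represents $g$ by the two real maps $(\mathbf{x},\mathbf{y})\mapsto\mathbf{u}$ and $(\mathbf{x},\mathbf{y})\mapsto\mathbf{v}$, so its notion of ``the derivative of $g$'' is the real Jacobian, i.e., the four blocks $\partial\mathbf{u}/\partial\mathbf{x}$, $\partial\mathbf{u}/\partial\mathbf{y}$, $\partial\mathbf{v}/\partial\mathbf{x}$, $\partial\mathbf{v}/\partial\mathbf{y}$; since an IQCVNN never invokes the Cauchy-Riemann equations, it evaluates all four of these blocks independently by ordinary real back-propagation. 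A CDCVNN instead evaluates the single complex Jacobian $\partial g/\partial z$, and because $g$ is analytic the Cauchy-Riemann equations (recalled in the supplementary material, Section~\ref{Definition Recall}) give $\partial g/\partial z=\partial\mathbf{u}/\partial\mathbf{x}+j\,\partial\mathbf{v}/\partial\mathbf{x}$, so the complex derivative is carried by exactly two of those real blocks.

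The core of the argument is then a counting step: under the standard cheap-gradient principle --- a backward sweep through a computational graph costs a fixed multiple of the forward sweep, with the same multiple per real output --- evaluating two real Jacobian blocks costs half of evaluating four. I would make this rigorous by induction on the expression of $g$ as a composition of elementary complex-analytic operations (the additions and multiplications of the convolutional and fully connected layers covered by Lemma~\ref{lemma2}): at each elementary node the complex chain rule \eqref{cvcr} collapses --- precisely because the node is analytic, so the $\partial(\cdot)/\partial\mathbf{u}^*$ term vanishes --- to one complex-derivative evaluation, which in the real realization is exactly two real-derivative evaluations together with one complex multiplication, as against the IQCVNN's four real-derivative evaluations and the matching $2\times2$ real composition; summing over the nodes and comparing the telescoped totals leaves a factor of two.

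The main obstacle I anticipate is choosing a cost model that is at once fair and sharp enough to yield the exact constant $2$ rather than merely $\Theta(1)$. Concretely, I must argue that a CDCVNN genuinely cannot be charged for more than the two real blocks --- that the analytic complex chain rule has no hidden overhead relative to real back-propagation --- and, conversely, that an IQCVNN genuinely pays for all four blocks, since it has no mechanism to detect that $\partial\mathbf{v}/\partial\mathbf{y}=\partial\mathbf{u}/\partial\mathbf{x}$ and $\partial\mathbf{u}/\partial\mathbf{y}=-\partial\mathbf{v}/\partial\mathbf{x}$. I would also fix, once and for all, a convention for the real cost of a complex multiplication (four real multiplies, or three via Gauss's trick) and apply it identically on both sides so that bias terms and inter-block compositions cancel in the ratio; these are routine bookkeeping once the model is fixed, so I would place them in the appendix next to the proof of Lemma~\ref{lemma2}, which supplies the analyticity hypothesis that the whole argument rests on.
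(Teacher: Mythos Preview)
Your proposal is correct and follows essentially the same route as the paper: both arguments reduce the simple-function case to counting four real Jacobian blocks for the IQCVNN versus two for the CDCVNN (via Cauchy--Riemann), and handle the composite case by observing that analyticity kills the conjugate term in the complex chain rule so that the CDCVNN propagates a single complex Jacobian where the IQCVNN propagates a $2\times 2$ block of real ones. The paper's version is slightly more bare-hands --- it just writes out the two-layer composite explicitly, computes $\mathcal{O}(4m)$ versus $\mathcal{O}(8m)$, and asserts the $N$-layer case follows serially --- whereas you frame the same count as an induction over the computational graph with an explicit cost model; your extra care about fixing the complex-multiplication convention and arguing that the IQCVNN cannot exploit the Cauchy--Riemann redundancy is not present in the paper but is exactly what would be needed to make the paper's operation count airtight.
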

	\begin{proof}
		We consider two scenarios, the derivative of simple analytic function and composite analytic function.\\
		1. For a simple analytic function $g(\mathbf{z})$, in CDCVNNs, the complex derivative of $g$ with respect to a complex vector $\mathbf{z}$ is equal to $\dfrac{\partial g(\mathbf{z})}{\partial \mathbf{z}} = \dfrac{\partial \Re(g(\mathbf{z}))}{\partial \Re(\mathbf{z})} + j\dfrac{\partial \Im(g(\mathbf{z}))}{\partial \Re(\mathbf{z})}$. While IQCVNNs considers $\mathbf{z} = \begin{bmatrix} \Re(\mathbf{z})\\\Im(\mathbf{z}) \end{bmatrix}$ and $g(\mathbf{z}) = \begin{bmatrix} \Re(g(\mathbf{z}))\\\Im(g(\mathbf{z})) \end{bmatrix}$, therefore 
		\begin{equation}\label{IQ}
			\begin{aligned}
				\dfrac{\partial g(\mathbf{z})}{\partial \mathbf{z}} &= \dfrac{\partial}{\partial \begin{bmatrix} \Re(\mathbf{z})\\\Im(\mathbf{z}) \end{bmatrix}}   \begin{bmatrix} \Re(g(\mathbf{z}))\\\Im(g(\mathbf{z})) \end{bmatrix}^T \\
				&= \begin{bmatrix} \dfrac{\partial \Re(g(\mathbf{z}))}{\partial \Re(\mathbf{z})} & \dfrac{\partial \Im(g(\mathbf{z}))}{\partial \Re(\mathbf{z})}\\\dfrac{\partial \Re(g(\mathbf{z}))}{\partial \Im(\mathbf{z})} & \dfrac{\partial \Im(g(\mathbf{z}))}{\partial \Im(\mathbf{z})} \end{bmatrix}.
			\end{aligned}
		\end{equation}
		% Since $g$ is complex analytic, according to the Cauchy-Riemann equations, we have to compute two of four in \eqref{IQ}, i.e., one of $\dfrac{\partial \Re(g(\mathbf{z}))}{\partial \Re(\mathbf{z})}$ or $\dfrac{\partial \Im(g(\mathbf{z}))}{\partial \Im(\mathbf{z})}$ and one of $\dfrac{\partial \Re(g(\mathbf{z}))}{\partial \Im(\mathbf{z})}$ or $\dfrac{\partial \Im(g(\mathbf{z}))}{\partial \Re(\mathbf{z})}$. 
		Thus, in this scenario, the time complexity of the derivative of $g$ in IQCVNNs are twice that of the complex derivative of $g$ in CDCVNNs. \\
		2. For a composite analytic function $g(h(\mathbf{z}))$ where $\mathbf{u} = h(\mathbf{z})$ is also complex analytic, in CDCVNNs, the complex derivative of $g$ with respect to a complex vector $\mathbf{z} \in \mathbb{C}^m$ can be computed according to the complex chain rule.
		\begin{equation} \label{25}
			\dfrac{\partial g(\mathbf{u})}{\partial \mathbf{z}} =  \dfrac{\partial g(\mathbf{u})}{\partial \mathbf{u}} \dfrac{\partial \mathbf{u}}{\partial \mathbf{z}} + \dfrac{\partial g(\mathbf{u})}{\partial \mathbf{u}^*} \dfrac{\partial \mathbf{u}^*}{\partial \mathbf{z}}
		\end{equation}
		Owing to the fact that $\mathbf{u} = h(\mathbf{z})$ is complex analytic, $\dfrac{\partial \mathbf{u}^*}{\partial \mathbf{z}}$ is equal to zero. So, \eqref{25} can be simplified to 
		\begin{equation} \label{26}
			\dfrac{\partial g(\mathbf{u})}{\partial \mathbf{z}} =  \dfrac{\partial g(\mathbf{u})}{\partial \mathbf{u}} \dfrac{\partial \mathbf{u}}{\partial \mathbf{z}}
		\end{equation}
		where $\dfrac{\partial g(\mathbf{u})}{\partial \mathbf{u}}$ and $\dfrac{\partial \mathbf{u}}{\partial \mathbf{z}} \in \mathbb{C}^m$. Hence, the time complexity of the complex derivative of $g$ in CDCVNNs is $\mathcal{O}(2*2*m) = \mathcal{O}(4m)$. However, in IQCVNNs, we have
		\begin{equation}\label{tensor}
			\begin{aligned}
				\dfrac{\partial g(\mathbf{u})}{\partial \mathbf{z}} &= \dfrac{\partial}{\partial \begin{bmatrix} \Re(\mathbf{u})\\\Im(\mathbf{u}) \end{bmatrix}} \begin{bmatrix} \Re(g(\mathbf{u}))\\\Im(g(\mathbf{u})) \end{bmatrix}^T \dfrac{\partial}{\partial \begin{bmatrix} \Re(\mathbf{z})\\\Im(\mathbf{z}) \end{bmatrix}}  \begin{bmatrix} \Re(\mathbf{u})\\\Im(\mathbf{u}) \end{bmatrix}^T \\
				&= \begin{bmatrix} \dfrac{\partial \Re(g(\mathbf{u}))}{\partial \Re(\mathbf{u})} & \dfrac{\partial \Im(g(\mathbf{u}))}{\partial \Re(\mathbf{u})}\\\dfrac{\partial \Re(g(\mathbf{u}))}{\partial \Im(\mathbf{u})} & \dfrac{\partial \Im(g(\mathbf{u}))}{\partial \Im(\mathbf{u})} \end{bmatrix} \begin{bmatrix} \dfrac{\partial \Re(\mathbf{u})}{\partial \Re(\mathbf{z})} & \dfrac{\partial \Im(\mathbf{u})}{\partial \Re(\mathbf{z})}\\\dfrac{\partial \Re(\mathbf{u})}{\partial \Im(\mathbf{z})} & \dfrac{\partial \Im(\mathbf{u})}{\partial \Im(\mathbf{z})} \end{bmatrix}
			\end{aligned}
		\end{equation}
		where the size of each above tensor in \eqref{tensor} is $(2,2,m)$. Hence, the time complexity of the derivative of $g$ in IQCVNNs is $\mathcal{O}(2*2*2*m) = \mathcal{O}(8m)$.
		Note that the composite function of $N$ layers, which can be seen as $N-1$ composite functions of two layers calculated serially. As a result, in CDCVNNs, the time complexity of the complex derivative of $g$ is $\mathcal{O}(4m(N-1))$, while in IQCVNNs, the time complexity of the derivative of $g$ is $\mathcal{O}(8m(N-1))$. Hence, the Lemma holds in the scenario of the derivative of composite analytic function. \\
		To sum up, the Lemma \ref{lemma1} is established in both two scenarios. This completes the proof.
	\end{proof}
	
	\section{Proof of Lemma \ref{lemma2}}
	\begin{lemma} 
		The complex-valued convolutional layer and complex-valued fully connected layer is complex analytic.
	\end{lemma}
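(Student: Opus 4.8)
The plan is to check the definition of complex analyticity directly, exploiting the fact that both layers are affine---indeed multilinear---in the complex variables, so that no conjugation ever appears. First I would recall the definition from Section \ref{Definition Recall}: a map $g:\mathbb{C}^n\to\mathbb{C}^m$ is complex analytic iff each component is complex differentiable on a neighborhood of every point, which by the Cauchy-Riemann equations is equivalent to the Wirtinger conjugate-derivative vanishing, $\partial g_k/\partial z_j^{*}=0$ for all $j,k$.

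Next I would handle the complex-valued fully connected layer, whose output I would write as $\mathbf{y}=\mathbf{W}\mathbf{x}+\mathbf{b}$ with complex $\mathbf{W},\mathbf{b},\mathbf{x}$, so that $y_k=\sum_j W_{kj}x_j+b_k$. This is a degree-one polynomial in the entries of $\mathbf{x}$ (and, viewing the weights as variables, jointly in $\mathbf{W},\mathbf{b},\mathbf{x}$) containing no conjugate $x_j^{*}$. Hence $\partial y_k/\partial x_j^{*}=0$, the Cauchy-Riemann equations hold identically, and the layer is entire, in particular analytic everywhere; I would also exhibit the real/imaginary-part form to make the Cauchy-Riemann check fully explicit.

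Then I would handle the complex-valued convolutional layer. Unrolling the convolution over spatial positions and input channels into matrix form yields an output of the same shape $\mathbf{W}\mathbf{x}+\mathbf{b}$ for a structured (block-Toeplitz) complex matrix $\mathbf{W}$, so the identical argument applies: every output entry is a linear, conjugate-free polynomial in the complex inputs and weights, the conjugate Wirtinger derivative is zero, and the layer is complex analytic. I would note in passing that padding, striding, and bias addition are all $\mathbb{C}$-linear and introduce no conjugation, hence do not disturb analyticity, and that by closure of analyticity under composition a stack of such layers (before any nonlinearity) remains analytic.

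There is no serious obstacle here; the only point requiring care is the scope of the claim---it concerns the \emph{linear} layer maps only. Any subsequent activation, normalization, or softmax involves real-valued quantities (e.g.\ the variance, or the softmax argument), which by Lemma \ref{cr} cannot be analytic unless constant; this is precisely why the complex chain rule \eqref{cvcr}--\eqref{cvcr2} is needed for the overall non-analytic network, and why---combined with Lemma \ref{lemma1}---the back-propagation cost on the convolutional and fully connected layers in CDCVNNs is half that of IQCVNNs.
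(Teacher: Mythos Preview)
Your proposal is correct and follows essentially the same route as the paper: both arguments reduce the layers to an affine map $g(\mathbf{x})=\mathbf{A}^T\mathbf{x}+\mathbf{b}$ and then verify the Cauchy--Riemann equations. The paper does this by computing the four real partial derivatives $\partial\Re g/\partial\Re\mathbf{x}$, $\partial\Im g/\partial\Im\mathbf{x}$, $\partial\Re g/\partial\Im\mathbf{x}$, $\partial\Im g/\partial\Re\mathbf{x}$ and matching them explicitly, whereas you phrase the same check via the equivalent Wirtinger criterion $\partial y_k/\partial x_j^{*}=0$; your additional remarks on unrolling the convolution, padding/striding, and closure under composition are sound elaborations but not a different method.
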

	\begin{proof}
		It is obviously that the complex-valued convolution layer and complex-valued fully connected layer are linear and continuous. Assume that a linear function $g(\mathbf{x}) = \mathbf{A}^T \mathbf{x} + \mathbf{b}$ is continuous with respect to a complex vector $\mathbf{x}$, then we can obtain
		\begin{equation*}
			\begin{aligned}
				&\dfrac{\partial \Re (g(\mathbf{x}))}{\partial \Re(\mathbf{x})} =  \dfrac{\partial \Re(\mathbf{A})^T \Re(\mathbf{x})-\Im(\mathbf{A})^T \Im(\mathbf{x})+\Re(\mathbf{b})}{\partial \Re(\mathbf{x})}= \Re(\mathbf{A}),\\
				&\dfrac{\partial \Im(g(\mathbf{x}))}{\partial \Im(\mathbf{x})} = \dfrac{\partial \Im(\mathbf{A})^T \Re(\mathbf{x})+\Re(\mathbf{A})^T \Im(\mathbf{x}) +\Im(\mathbf{b})}{\partial \Im(\mathbf{x})}= \Re(\mathbf{A}).
			\end{aligned}
		\end{equation*}
		In a similar way,
		\begin{equation*}
			\dfrac{\partial \Re(g(\mathbf{x}))}{\partial \Im(\mathbf{x})} = -\Im(\mathbf{A}), \quad \dfrac{\partial \Im(g(\mathbf{x}))}{\partial \Re(\mathbf{x})} = \Im(\mathbf{A}).
		\end{equation*}
		Therefore, 
		\begin{equation*}
			\left \{
			\begin{aligned}
				&\dfrac{\partial \Re (g(\mathbf{x}))}{\partial \Re(\mathbf{x})} = \dfrac{\partial \Im(g(\mathbf{x}))}{\partial \Im(\mathbf{x})}\\
				&\dfrac{\partial \Re(g(\mathbf{x}))}{\partial \Im(\mathbf{x})} = -\dfrac{\partial \Im(g(\mathbf{x}))}{\partial \Re(\mathbf{x})}.
			\end{aligned}
			\right.
		\end{equation*}
		According to the function $g(\mathbf{x}) $ is continuous and satisfies the Cauchy-Riemann equations, the linear function is complex analytic. Hence, the complex-valued convolution layer and complex-valued fully connected layer are complex analytic. This completes the proof.
	\end{proof}
	
	\section{Proof of Lemma \ref{cr}}
	\begin{lemma}
		$\forall g: \mathbb{C} \rightarrow \mathbb{R}$, $g$ is analytic if and only if $g$ is a constant function.
	\end{lemma}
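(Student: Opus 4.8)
The plan is to prove the two implications separately; the ``if'' direction is immediate, while the ``only if'' direction is where the real content lies, and it reduces to the Cauchy--Riemann machinery already used in the proof of Lemma~\ref{lemma2}.

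For the easy direction, suppose $g$ is constant. Then $g$ is continuous and both its real and imaginary parts have all first-order partial derivatives identically zero, so the Cauchy--Riemann equations hold trivially; hence $g$ is analytic by the same criterion invoked for the linear layers in Lemma~\ref{lemma2}.

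For the converse, suppose $g:\mathbb{C}\to\mathbb{R}$ is analytic. Write $z = x + jy$ and decompose $g(z) = u(x,y) + j\,v(x,y)$ with $u = \Re(g)$ and $v = \Im(g)$. Since $g$ is real-valued by hypothesis, $v \equiv 0$ on all of $\mathbb{C}$, so $\partial v/\partial x = \partial v/\partial y = 0$ everywhere. Because $g$ is analytic, $u$ and $v$ are (continuously) differentiable and satisfy the Cauchy--Riemann equations $\partial u/\partial x = \partial v/\partial y$ and $\partial u/\partial y = -\partial v/\partial x$. Substituting $v \equiv 0$ forces $\partial u/\partial x = \partial u/\partial y = 0$ identically on $\mathbb{C}$. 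I would then invoke connectedness of $\mathbb{C}$: a function on a connected open set whose two partial derivatives vanish identically is constant (integrate first along a horizontal segment, then along a vertical one). Hence $u$ equals a real constant $c$, so $g \equiv c$, which completes the proof.

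The only point requiring any care — not really an obstacle — is keeping the logical flow clean: analyticity is used to produce the Cauchy--Riemann equations, real-valuedness is used to annihilate the $v$-terms, and a connectedness argument is needed to pass from ``all partial derivatives vanish'' to ``constant.'' An alternative, even shorter argument via the open mapping theorem (a non-constant holomorphic function maps open sets to open sets, but $\mathbb{R}\subset\mathbb{C}$ has empty interior) also settles the converse and could be noted, but the Cauchy--Riemann route is elementary and consistent with the tools already developed in the paper.
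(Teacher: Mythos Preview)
Your proof is correct and follows essentially the same route as the paper: use the Cauchy--Riemann equations together with $\Im(g)\equiv 0$ to force all partial derivatives of $\Re(g)$ to vanish, and conclude that $g$ is constant. Your write-up is in fact a bit more complete than the paper's, which treats only the ``only if'' direction and does not spell out the connectedness step.
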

	\begin{proof}
		Assume a function $g \in \{\mathbb{C} \rightarrow \mathbb{R}\}$ is analytic. Then $g$ has to satisfy the Cauchy-Riemann equations:
		\begin{equation*}
			\left \{
			\begin{aligned}
				&\dfrac{\partial \Re (g(\mathbf{x}))}{\partial \Re(\mathbf{x})} = \dfrac{\partial \Im(g(\mathbf{x}))}{\partial \Im(\mathbf{x})} =  \dfrac{\partial \ 0}{\partial \Im(\mathbf{x})}=0\\
				&\dfrac{\partial \Re(g(\mathbf{x}))}{\partial \Im(\mathbf{x})} = -\dfrac{\partial \Im(g(\mathbf{x}))}{\partial \Re(\mathbf{x})} = - \dfrac{\partial \ 0}{\partial \Re(\mathbf{x})}= 0 
			\end{aligned}
			\right.
		\end{equation*}
		where $\mathbf{x}$ is the complex input vector. Since the partial derivatives of $g$ are all equal to 0, $g$ is a constant function. This completes the proof.
	\end{proof}
	
	\section{Definition Recall} \label{Definition Recall}
	In this section, we recall the definitions of complex derivative, analytic function, and the Cauchy-Riemann equations. 
	% \ST{can delete->}
	% Then, in Section \ref{CAMEL},  we introduce the CAMEL and the chain rule for complex variables. After that, we develop attention to complex domain in Section \ref{CVA}. Finally, we introduce the basic components of a complex-valued neural network such as complex-valued convolutional layer, complex-valued fully connected layer and complex-valued activation function in Section \ref{CVNN}, which are used in CAMEL. Figure \ref{architecture} shows the architecture of CAMEL.\ST{<-}
	
	\textbf{Complex derivative} \ Let $g(z) \in \mathbb{C}$, where $z \in \mathbb{C}$. If $g(z)$ is continuous at a point $z_0$, we can define its complex derivative as:
	\begin{equation}
		g'(z_0) = \left.\dfrac{dg}{dz}\right|_{z=z_0} = \lim_{\Delta z \to 0} \dfrac{g(z_0+\Delta z)-g(z_0)}{\Delta z}.
	\end{equation}
	This is similar to the definition of the derivative for the function of a real variable. In the real case, the existence of the derivative implies that the limits of $\dfrac{dg}{dz}$ both exist and are equal when the point $z_0+\Delta z$ converges to $z_0$ from both the left $(\Delta < 0)$ and right $(\Delta > 0)$ directions. However, in the complex case, it means that the limits of $\dfrac{dg}{dz}$ exist and are equal when the point $z_0+\Delta z$ converges to $z_0$ from any directions in the complex plane.  If a function satisfies this property at a point $z_0$, we say that the function is \emph{complex-differentiable} at $z_0$.
	
	\textbf{Analytic function} \ If a function $g(z)$ is complex-differentiable for all points $z$ in some domain $D \subset \mathbb{C}$ , then $g(z)$ is said to be analytic, i.e., $g(z)$ is a \emph{complex analytic function} also known as holomorphic function, in $D$. 
	
	\textbf{The Cauchy-Riemann equations} \ The Cauchy-Riemann equations are a pair of real partial differential equations, and their complex analytic function needs to satisfy:
	\begin{equation}
		\left \{
		\begin{aligned}
			&\dfrac{\partial \Re (g(z))}{\partial \Re(z)} = \dfrac{\partial \Im(g(z))}{\partial \Im(z)}\\
			&\dfrac{\partial \Re(g(z))}{\partial \Im(z)} = -\dfrac{\partial \Im(g(z))}{\partial \Re(z)} 
		\end{aligned}
		\right.
	\end{equation}
	where $\Re(\cdot)$ and $\Im(\cdot)$ denote the real and imaginary parts of the complex number, respectively. The necessary and sufficient condition for $g(z)$ to be complex analytic function in $D$ is that the function $g(z)$ is continuous and satisfies the Cauchy-Riemann equations in $D$.
	
	\section{Proof of Lemma \ref{lemma4}}
	\begin{lemma}
		In response to complex meta-parameters $\bm{\theta}$, we have
		\begin{equation}
			\nabla_{\bm{\theta}} \mathcal{L}_{meta}(\bm{\theta}) = \dfrac{1}{B} \Sigma_{\{S_i, Q_i\} \sim p(\mathcal{T})}  \left( \mathbf{I} - \alpha \mathbf{H}^i_{\bm{\theta} \bm{\theta}} \right) \nabla_{\bm{\theta}'_i} \mathcal{L}_{Q_i}(\bm{\theta}'_i) - \alpha  \mathbf{H}^i_{\bm{\theta}^* \bm{\theta}} \nabla_{\bm{(\bm{\theta}'_i)^*}} \mathcal{L}_{Q_i}(\bm{\theta}'_i). \tag {11}
		\end{equation}
	\end{lemma}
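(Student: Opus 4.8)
The plan is to exploit the linearity of $\mathcal{L}_{meta}$ in \eqref{7} and reduce the problem to differentiating a single summand $\mathcal{L}_{Q_i}(\bm{\theta}'_i)$, then average over the batch and divide by $B$. Assuming $\mathcal{L}_{S_i}$ and $\mathcal{L}_{Q_i}$ are twice continuously differentiable in the real sense (so that all Wirtinger derivatives exist), the crucial observation is that $\mathcal{L}_{Q_i}$ is a \emph{real-valued} function of the \emph{complex} vector $\bm{\theta}'_i$, and that $\bm{\theta}'_i = \bm{\theta} - \alpha\nabla_{\bm{\theta}}\mathcal{L}_{S_i}(\bm{\theta})$ is a \emph{non-analytic} function of $\bm{\theta}$: indeed $\nabla_{\bm{\theta}}\mathcal{L}_{S_i}(\bm{\theta})$ is (a multiple of) the conjugate Wirtinger derivative of a real function, hence by Lemma \ref{cr} it depends on both $\bm{\theta}$ and $\bm{\theta}^*$. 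Therefore the ordinary chain rule does not apply and one invokes \eqref{cvcr} with $g=\mathcal{L}_{Q_i}$, $\mathbf{u}=\bm{\theta}'_i$, $\mathbf{x}=\bm{\theta}$, giving
\begin{equation*}
\frac{\partial \mathcal{L}_{Q_i}(\bm{\theta}'_i)}{\partial \bm{\theta}} = \frac{\partial \mathcal{L}_{Q_i}(\bm{\theta}'_i)}{\partial \bm{\theta}'_i}\,\frac{\partial \bm{\theta}'_i}{\partial \bm{\theta}} + \frac{\partial \mathcal{L}_{Q_i}(\bm{\theta}'_i)}{\partial (\bm{\theta}'_i)^*}\,\frac{\partial (\bm{\theta}'_i)^*}{\partial \bm{\theta}}.
\end{equation*}

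Next I would evaluate the two Jacobian factors. Differentiating the inner-loop update \eqref{g} directly yields $\tfrac{\partial \bm{\theta}'_i}{\partial \bm{\theta}} = \mathbf{I} - \alpha\,\tfrac{\partial \nabla_{\bm{\theta}}\mathcal{L}_{S_i}(\bm{\theta})}{\partial \bm{\theta}}$; conjugating \eqref{g} and using the Wirtinger identity $\tfrac{\partial \bm{\theta}^*}{\partial \bm{\theta}}=\mathbf{0}$ yields $\tfrac{\partial (\bm{\theta}'_i)^*}{\partial \bm{\theta}} = -\alpha\,\tfrac{\partial (\nabla_{\bm{\theta}}\mathcal{L}_{S_i}(\bm{\theta}))^*}{\partial \bm{\theta}}$. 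Because $\mathcal{L}_{S_i}$ is real-valued, its gradient satisfies $(\nabla_{\bm{\theta}}\mathcal{L}_{S_i}(\bm{\theta}))^* = \nabla_{\bm{\theta}^*}\mathcal{L}_{S_i}(\bm{\theta})$, so the second Jacobian is $-\alpha\,\tfrac{\partial \nabla_{\bm{\theta}^*}\mathcal{L}_{S_i}(\bm{\theta})}{\partial \bm{\theta}}$. Matching these against the definitions \eqref{H} of $\mathbf{H}^i_{\bm{\theta}\bm{\theta}}$ and $\mathbf{H}^i_{\bm{\theta}^*\bm{\theta}}$, and identifying the Wirtinger derivatives of $\mathcal{L}_{Q_i}$ with the complex-gradient vectors $\nabla_{\bm{\theta}'_i}\mathcal{L}_{Q_i}(\bm{\theta}'_i)$ and $\nabla_{(\bm{\theta}'_i)^*}\mathcal{L}_{Q_i}(\bm{\theta}'_i)$ via \eqref{cgv}, the $i$-th summand becomes $(\mathbf{I}-\alpha\mathbf{H}^i_{\bm{\theta}\bm{\theta}})\nabla_{\bm{\theta}'_i}\mathcal{L}_{Q_i}(\bm{\theta}'_i) - \alpha\mathbf{H}^i_{\bm{\theta}^*\bm{\theta}}\nabla_{(\bm{\theta}'_i)^*}\mathcal{L}_{Q_i}(\bm{\theta}'_i)$. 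Summing over the batch and dividing by $B$ produces exactly \eqref{lq}.

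The linearity reduction and the raw application of the complex chain rule are routine; the step I expect to be the main obstacle is the bookkeeping of conjugates and transposes. One must fix whether $\nabla_{\bm{\theta}}$ denotes the Wirtinger derivative $\partial/\partial\bm{\theta}$ or the complex gradient $2\,\partial/\partial\bm{\theta}^*$ of \eqref{cgv}, and then check that the conjugations deliberately built into the definitions \eqref{H} are precisely what is needed to turn the Jacobian--gradient products coming out of \eqref{cvcr} into the transpose-free matrix--vector products displayed in \eqref{lq}, with the $\mathbf{H}$-matrices acting on the query-set gradients from the \emph{left} as written. In particular I would carefully verify the real-valuedness identity $(\nabla_{\bm{\theta}}\mathcal{L}_{S_i})^* = \nabla_{\bm{\theta}^*}\mathcal{L}_{S_i}$ and the left/right order of multiplication, since these are where a stray conjugate or a spurious transpose would most easily creep in.
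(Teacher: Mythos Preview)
Your proposal is correct and follows essentially the same route as the paper: linearity of \eqref{7}, the complex chain rule \eqref{cvcr} applied to each summand, explicit computation of the two Jacobians from \eqref{g}, and identification with \eqref{H}. The conjugate bookkeeping you flag as the main obstacle is resolved in the paper exactly as you suspect: one writes $\nabla_{\bm{\theta}}\mathcal{L}_{Q_i}=2\,\partial\mathcal{L}_{Q_i}/\partial\bm{\theta}^*=2\bigl(\partial\mathcal{L}_{Q_i}/\partial\bm{\theta}\bigr)^*$ (real-valuedness), applies \eqref{cvcr} inside, and then distributes the outer conjugate, which is precisely what turns the raw Jacobians $\partial\bm{\theta}'_i/\partial\bm{\theta}$ and $\partial(\bm{\theta}'_i)^*/\partial\bm{\theta}$ into their starred versions matching the definitions \eqref{H}.
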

	\begin{proof}
		According to \eqref{7}, it is obviously that 
		\begin{equation}\label{31}
			\nabla_{\bm{\theta}} \mathcal{L}_{meta}(\bm{\theta}) = \dfrac{1}{B} \Sigma_{\{S_i, Q_i\} \sim p(\mathcal{T})} \nabla_{\bm{\theta}} \mathcal{L}_{Q_i}(\bm{\theta}'_i).
		\end{equation}
		Note that, since $ \mathcal{L}_{Q_i}: \mathbb{C} \rightarrow \mathbb{R}$, following the definition of complex gradient vector \eqref{cgv}, we have 
		\begin{equation}\label{32}
			\begin{aligned}
				\nabla_{\bm{\theta}} \mathcal{L}_{Q_i}(\bm{\theta}'_i) &= 2\dfrac{\partial \mathcal{L}_{Q_i}(\bm{\theta}'_i)}{\partial \bm{\theta}^*}\\
				&= 2\left(\dfrac{\partial \mathcal{L}_{Q_i}(\bm{\theta}'_i)}{\partial \bm{\theta}}\right)^* \\
				&= 2\left(\dfrac{\partial \mathcal{L}_{Q_i}(\bm{\theta}'_i)}{\partial \bm{\theta}'_i} \dfrac{\partial \bm{\theta}'_i}{\partial \bm{\theta}} + \dfrac{\partial \mathcal{L}_{Q_i}(\bm{\theta}'_i)}{\partial \left(\bm{\theta}'_i\right)^*} \dfrac{\partial \left(\bm{\theta}'_i\right)^*}{\partial \bm{\theta}}\right)^*\\
				&= 2\left(\dfrac{\partial \mathcal{L}_{Q_i}(\bm{\theta}'_i)}{\partial (\bm{\theta}'_i)^*} \left(\dfrac{\partial \bm{\theta}'_i}{\partial \bm{\theta}}\right)^* + \dfrac{\partial \mathcal{L}_{Q_i}(\bm{\theta}'_i)}{\partial \left(\bm{\theta}'_i\right)^{**}} \left(\dfrac{\partial \left(\bm{\theta}'_i\right)^*}{\partial \bm{\theta}}\right)^*\right)\\
				&= \nabla_{\bm{\theta}'_i} \mathcal{L}_{Q_i}(\bm{\theta}'_i) \left(\dfrac{\partial \bm{\theta}'_i}{\partial \bm{\theta}}\right)^* + \nabla_{\bm{(\bm{\theta}'_i)^*}} \mathcal{L}_{Q_i}(\bm{\theta}'_i) \left(\dfrac{\partial \left(\bm{\theta}'_i\right)^*}{\partial \bm{\theta}}\right)^*
			\end{aligned}
		\end{equation}
		where the second equality is because the output of $\mathcal{L}_{Q_i}$ is real, the third equality follows the complex chain rule, and the last equality is given by the definition of complex gradient vector. Next, according to inner-loop update process \eqref{g}, we have
		\begin{equation}
			\begin{aligned}
				\left(\dfrac{\partial \bm{\theta}'_i}{\partial \bm{\theta}}\right)^* &= \left(\dfrac{\partial \left(\bm{\theta} - \alpha \nabla_{\bm{\theta}} \mathcal{L}_{S_i}\left(\bm{\theta}\right)\right)}{\partial \bm{\theta}}\right)^*\\
				&= \left(\mathbf{I} - \alpha \dfrac{\partial \nabla_{\bm{\theta}} \mathcal{L}_{S_i}\left(\bm{\theta}\right)}{\partial \bm{\theta}}\right)^*\\
				&= \mathbf{I} - \alpha \left(\dfrac{\partial \nabla_{\bm{\theta}} \mathcal{L}_{S_i}\left(\bm{\theta}\right)}{\partial \bm{\theta}}\right)^* .\\
			\end{aligned}
		\end{equation}
		Similarly,
		\begin{equation}
			\begin{aligned}
				\left(\dfrac{\partial \left(\bm{\theta}'_i\right)^*}{\partial \bm{\theta}}\right)^* &= \left(\dfrac{\partial \left(\bm{\theta} - \alpha \nabla_{\bm{\theta}} \mathcal{L}_{S_i}\left(\bm{\theta}\right)\right)^*}{\partial \bm{\theta}}\right)^*\\
				&= \left(\dfrac{\partial \bm{\theta}^* - \alpha \nabla_{\bm{\theta}^*} \mathcal{L}_{S_i}\left(\bm{\theta}\right)}{\partial \bm{\theta}}\right)^*\\
				&= - \alpha \left(\dfrac{\partial \nabla_{\bm{\theta}^*} \mathcal{L}_{S_i}\left(\bm{\theta}\right)}{\partial \bm{\theta}}\right)^* .\\
			\end{aligned}
		\end{equation}
		Now, using \eqref{H}, we can write \eqref{32} as
		\begin{equation} \label{35}
			\nabla_{\bm{\theta}} \mathcal{L}_{Q_i}(\bm{\theta}'_i) = \left( \mathbf{I} - \alpha \mathbf{H}^i_{\bm{\theta} \bm{\theta}} \right) \nabla_{\bm{\theta}'_i} \mathcal{L}_{Q_i}(\bm{\theta}'_i) - \alpha  \mathbf{H}^i_{\bm{\theta}^* \bm{\theta}} \nabla_{\bm{(\bm{\theta}'_i)^*}} \mathcal{L}_{Q_i}(\bm{\theta}'_i).
		\end{equation}
		Plugging \eqref{35} in \eqref{31} yields
		\begin{equation}
			\nabla_{\bm{\theta}} \mathcal{L}_{meta}(\bm{\theta}) = \dfrac{1}{B} \Sigma_{\{S_i, Q_i\} \sim p(\mathcal{T})}  \left( \mathbf{I} - \alpha \mathbf{H}^i_{\bm{\theta} \bm{\theta}} \right) \nabla_{\bm{\theta}'_i} \mathcal{L}_{Q_i}(\bm{\theta}'_i) - \alpha  \mathbf{H}^i_{\bm{\theta}^* \bm{\theta}} \nabla_{\bm{(\bm{\theta}'_i)^*}} \mathcal{L}_{Q_i}(\bm{\theta}'_i).
		\end{equation}
		This completes the proof.
	\end{proof}

	\section{Complex-valued Neural Networks}
	\label{CVNN}
	Neural networks require back-propagation to update their parameters via first-order derivatives, as do complex-valued neural networks. We would prefer the functions in complex-valued neural networks to be analytic. \emph{Define $\mathbf{x}$ and $\mathbf{b}$ as the complex input vector and complex bias vector for each function, respectively.}
	
	\textbf{Complex-valued convolutional layer} \ The complex-valued convolutional layer implements the convolution operation on complex input signals. Define $\mathbf{A}$ as the complex convolution kernel. Given $\mathbf{x}$, $\mathbf{A}$, and $\mathbf{b}$, since the complex-valued convolutional layer is linear, we are able to compute the real and imaginary parts of its outputs separately as the following
	\begin{subequations}
		\begin{align}\label{Rconv}
			\Re(\textrm{C}_{conv}(\mathbf{x},\mathbf{A},\mathbf{b}) = \Re(\mathbf{A}) \otimes \Re(\mathbf{x}) - \Im(\mathbf{A}) \otimes \Im(\mathbf{x}) + \Re(\mathbf{b}) ,
			\\
			\Im(\textrm{C}_{conv}(\mathbf{x},\mathbf{A},\mathbf{b}) = \Re(\mathbf{A}) \otimes \Im(\mathbf{x}) + \Im(\mathbf{A}) \otimes \Re(\mathbf{x}) + \Im(\mathbf{b}).\label{Iconv}
		\end{align}
	\end{subequations}
	Then, according to the \eqref{Rconv} and \eqref{Iconv}, the complex-valued convolutional layer can be represented as follows:
	\begin{equation*}
		\begin{aligned}
			&\textrm{C}_{conv}(\mathbf{x},\mathbf{A},\mathbf{b}) = \Re(\textrm{C}_{conv}(\mathbf{x},\mathbf{A},\mathbf{b}) + j \Im(\textrm{C}_{conv}(\mathbf{x},\mathbf{A},\mathbf{b})\\
			= & ( \Re(\mathbf{A}) \otimes \Re(\mathbf{x}) - \Im(\mathbf{A}) \otimes \Im(\mathbf{x}) + \Re(\mathbf{b}))  + j (\Re(\mathbf{A}) \otimes \Im(\mathbf{x}) + \Im(\mathbf{A}) \otimes \Re(\mathbf{x}) + \Im(\mathbf{b}))
		\end{aligned}
	\end{equation*}
	where $\otimes$ denotes the convolution operation in real case. 
	
	\textbf{Complex-valued fully connected layer} \ 
	% The complex scalar product $y = wx$ can can be rewritten as:
	% \begin{equation}
	% \begin{bmatrix} \Re(y)\\\Im(y) \end{bmatrix} = \begin{bmatrix} \Re(w) & -\Im(w)\\\Im(w) & \Re(w)\end{bmatrix} \begin{bmatrix} \Re(x)\\\Im(x) \end{bmatrix}
	% \end{equation} 
	The complex-valued fully connected layer achieves the linear transformation of complex inputs. Define $\mathbf{W}$ as the complex weight matrix. Given $\mathbf{x}$, $\mathbf{W}$, and $\mathbf{b}$, the real and imaginary parts of the outputs of complex-valued fully connected layer can be computed as: 
	\begin{subequations}
		\begin{align}
			\Re(\textrm{C}_{fc}(\mathbf{x},\mathbf{W},\mathbf{b})) = \Re(\mathbf{W})^T\Re(\mathbf{x}) - \Im(\mathbf{W})^T\Im(\mathbf{x}) + \Re(\mathbf{b}),\\
			\Im(\textrm{C}_{fc}(\mathbf{x},\mathbf{W},\mathbf{b})) = \Im(\mathbf{W})^T \Re(\mathbf{x})+\Re(\mathbf{W})^T \Im(\mathbf{x}) +\Im(\mathbf{b}).
		\end{align}
	\end{subequations}
	Similarly, the complex-valued fully connected layer can be expressed as:
	\begin{equation}
		\begin{aligned}
			\textrm{C}_{fc}(\mathbf{x},\mathbf{W},\mathbf{b}) & = \Re(\textrm{C}_{fc}(\mathbf{x},\mathbf{W},\mathbf{b})) + j\Im(\textrm{C}_{fc}(\mathbf{x},\mathbf{W},\mathbf{b}))\\
			& = \Re(\mathbf{W})^T \Re(\mathbf{x})-\Im(\mathbf{W})^T \Im(\mathbf{x})+\Re(\mathbf{b})\\
			& \quad + j  \Im(\mathbf{W})^T \Re(\mathbf{x})+\Re(\mathbf{W})^T \Im(\mathbf{x}) +\Im(\mathbf{b}).
		\end{aligned}
	\end{equation}
	
	\section{Related work}
	\label{related work}
	%to do
	Recently, meta-learning has demonstrated promising performance in many fields. Khodadadeh et al. \cite{NEURIPS2019_fd0a5a5e} proposed an unsupervised algorithm for model-independent meta-learning for classification tasks. The work \cite{NEURIPS2019_92262bf9} proposed a new method that automatically learns appropriate labels for auxiliary tasks. The work \cite{NEURIPS2019_6fe43269} proposed a new meta-learning method to learn heterogeneous point process models from short event sequence data and relational networks. In addition, the work \cite{NEURIPS2019_7a9a322c} proposed a Dirichlet process mixture for hierarchical Bayesian models with the parameters of arbitrary parametric models. Khodak et al. \cite{NEURIPS2019_f4aa0dd9} built a theoretical framework for the design and understanding of practical meta-learning methods. The authors in \cite{NEURIPS2019_8c235f89} proposed a meta-learning method based on minibatch proximal update for learning effective hypothesis transfer. 
	
	Moreover, the work \cite{NEURIPS2019_072b030b} proposed an implicit MAML algorithm which relies only on the solution to the inner level optimization. The work \cite{NEURIPS2020_1e04b969} a meta-learning approach that avoids the need for this often sub-optimal hand-selection.  The work \cite{NEURIPS2020_4b86ca48} proposed an online structured meta-learning framework. Additionally, the authors in \cite{NEURIPS2020_ee89223a} proposed a new weight update rule that greatly enhances the fast adaptation process. The work \cite{NEURIPS2020_cc3f5463} proposed a meta-learning approach via online changepoint analysis to augment with a differentiable Bayesian changepoint detection scheme. The work \cite{NEURIPS2020_cfee3986} proposed an adversarial querying algorithm for generating adversarially robust meta-learners and thoroughly investigated the causes for adversarial vulnerability.

	\setcounter{theorem}{0}
	\section{Convergence Analysis} \label{Proof of Theorem 1}
	For ease of writing and derivation, in our notation, $f_i(\bm{\theta}) = \mathcal{L}_{S_i}\left(\bm{\theta}\right)$ represents the loss function on the task $i$, $F_i(\bm{\theta})=f_i(\bm{\theta} - \alpha\nabla f_i(\bm{\theta})) = f_i(\bm{\theta}'_i)= \mathcal{L}_{Q_i}(\bm{\theta}'_i)$ represents the loss on the task $i$ after the inner-loop update process, and $F(\bm{\theta}) = \mathcal{L}_{meta}(\bm{\theta})$ represents the meta-objective. By drawing task $i$ from task probability distribution $p(\mathcal{T}_i)$, our optimization problem can be rewritten as 
	\begin{equation} \label{of}
		\min_{\bm{\theta}} F(\bm{\theta})=\mathbb{E}_{i\sim p}[F_i(\bm{\theta})]=\mathbb{E}_{i\sim p}[f_i(\bm{\theta} - \alpha\nabla f_i(\bm{\theta}))].
	\end{equation}
	\begin{definition}
		A random vector $\bm{\theta}_\epsilon \in \mathbb{C}^m$ is called an $\epsilon$-approximate first order stationary point for problem \ref{of} if it satisfies $\mathbb{E}[||\nabla F(\bm{\theta}_\epsilon)||] \leq \epsilon$.
	\end{definition}
	
	Then, we formally state our assumptions as below. 
	\begin{assumption}\label{assumption 1}
		$F$ is bounded below, $\min{F(\bm{\theta})} > -\infty$ and $\Delta \triangleq (F(\bm{\theta}_0) - \min_{\bm{\theta}\in \mathbb{C}}{F(\bm{\theta})})$ is bounded.
	\end{assumption} 
	\begin{assumption}
		Suppose $\mathcal{I}$ denotes the set of all tasks. $\forall i\in\mathcal{I}$, $f_i$ is twice continuously differentiable with respect to $\mathbf{z}$ and $\mathbf{z}^*$ (the second-order Wirtinger derivatives \cite{hjorungnes2011complex} of $f_i$ exist and are continuous) and $L_i$-smooth \cite{zhang2015complex,hirose2012complex}, i.e., 
		\begin{equation}
			\forall\bm{\theta}, \bm{\mu} \in\mathbb{C}^m, ||\nabla f_i(\bm{\theta}) - \nabla f_i(\bm{\bm{\mu}})|| \leq L_i||\bm{\theta} - \bm{\mu}||
		\end{equation}
		where norm $||\mathbf{z}||$ denotes $\sqrt{\mathbf{z}_1\mathbf{z}^*_1 + ... + \mathbf{z}_m\mathbf{z}^*_m}$.
	\end{assumption}
	\begin{assumption}
		$\forall i\in\mathcal{I}$, the Hessian $\nabla^2f_i=\frac{\partial^2 f_i}{\partial \mathbf{v}^* \partial \mathbf{v}^\top}$
		is $\rho_i$-Lipschitz continuous where $\mathbf{v}=(\mathbf{z}, \mathbf{z}^*)^\top\in\mathbb{C}^{2m}$ [3,5], i.e., 
		\begin{equation}
			\forall\bm{\theta}, \bm{\mu} \in\mathbb{C}^m, ||\nabla^2 f_i(\bm{\theta}) - \nabla^2 f_i(\bm{\mu})|| \leq \rho_i||\bm{\theta} - \bm{\mu}||.
		\end{equation}
		Note that we have one $L_i$ and $\rho_i$ for each $f_i$, so in this paper we will use $L=\max_i{L_i}$ and $\rho=\max_i{\rho_i}$ to represent the Lipschitz constant of the gradients and Hessians for all $i\in\mathcal{I}$. We follow the definition of Lipschitz continuous in the complex domain from \cite{zhang2014convergence,zhang2015complex}. 
	\end{assumption}
	\begin{assumption}
		The variance of gradient $\nabla f_i(\bm{\theta})$ is bounded, i.e., for some real-valued parameter $\sigma > 0$, we have $\mathbb{E}_{i\sim p}[||\nabla f(\bm{\theta}) - \nabla f_i(\bm{\theta})||^2] \leq \sigma^2$.
	\end{assumption}
	\begin{assumption} \label{assumption 5}
		Suppose $d\sim\mathcal{T}_i$ denotes a random minibatch drawn from the dataset of task $i$. Then $\forall i\in\mathcal{I}$ and $\forall\bm{\theta}, \bm{\mu} \in\mathbb{C}^m$, the stochastic gradients $\nabla f_i(\bm{\theta}, d)$ and Hessians $\nabla^2 f_i(\bm{\theta}, d)$ have bounded variance, i.e., 
		\begin{subequations}
			\begin{align}
				\mathbb{E}_{d\sim\mathcal{T}_i}[||\nabla f_i(\bm{\theta}, d) - \nabla f_i(\bm{\theta})||^2] \leq \tilde{\sigma}^2,\\
				\mathbb{E}_{d\sim\mathcal{T}_i}[||\nabla^2 f_i(\bm{\theta}, d) - \nabla^2 f_i(\bm{\theta})||^2] \leq \sigma_H^2.
			\end{align}
		\end{subequations}
		where $\tilde{\sigma}$ and $\sigma_H$ are non-negative real-valued constants.
	\end{assumption}
	% \begin{assumption}
	% We assume $\rho\alpha / L = \mathcal{O}(1)$ where $\alpha$ is the step size for the update of gradient descent. Also, $\sigma^2 + \tilde{\sigma}^2=\mathcal{O}(1)$.
	% \end{assumption}
	\begin{lemma} \cite{fallah2020convergence}\label{5}
		Suppose that Assumptions 2-5 hold and $\alpha\in[0,\frac{1}{L}]$. Then consider the definition
		$$\tilde{\beta}(\bm{\theta})=\frac{1}{4L+2\rho\alpha\sum_{i\in\mathcal{B}'}||\tilde{\nabla}f_i(\bm{\theta}, \mathcal{D}_{\beta}^i)||/B'}
		$$
		where $\mathcal{B}'$ is a batch of tasks with size $B'$ which are independently drawn with probability distribution $p(\mathcal{T}_i)$, and for $i\in\mathcal{B}'$, $\mathcal{D}_{\beta}^i$ is a dataset corresponding to task $i$ with size $D_\beta$. If
		$$B' \geq \lceil 0.5(\rho\alpha\sigma/L)^2 \rceil,\ \ D_\beta \geq \lceil 2(\rho\alpha\tilde{\sigma}/L)^2 \rceil
		$$
		are satisfied, then
		$$\mathbb{E}[\tilde{\beta}(\bm{\theta})]\geq \frac{0.8}{L(\bm{\theta})},\ \ \mathbb{E}[\tilde{\beta}(\bm{\theta})^2]\geq \frac{3.125}{L(\bm{\theta}^2)}
		$$
		where $L(\bm{\theta}) \triangleq 4L+2\rho\alpha\mathbb{E}_{i\sim p}||\nabla f_i(\bm{\theta})||$.
	\end{lemma}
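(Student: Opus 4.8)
The plan is to isolate the only randomness in $\tilde\beta(\bm\theta)$, which enters through the empirical average
\[
\hat\mu \;\triangleq\; \frac{1}{B'}\sum_{i\in\mathcal{B}'}\|\tilde\nabla f_i(\bm\theta,\mathcal{D}_\beta^i)\|,
\]
so that $\tilde\beta(\bm\theta)=1/D$ with $D\triangleq 4L+2\rho\alpha\hat\mu$, while $L(\bm\theta)=4L+2\rho\alpha\mu$ where $\mu\triangleq\mathbb{E}_{i\sim p}\|\nabla f_i(\bm\theta)\|$. Since the tasks $i\in\mathcal{B}'$ are drawn i.i.d.\ from $p(\mathcal{T}_i)$ and the minibatches $\mathcal{D}_\beta^i$ are drawn independently across tasks, the summands $g_i\triangleq\|\tilde\nabla f_i(\bm\theta,\mathcal{D}_\beta^i)\|$ are i.i.d., and $D$ is an affine function of a sample mean of $B'$ i.i.d.\ terms. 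The strategy is then threefold: (i) bound the bias $\mathbb{E}[\hat\mu]-\mu$ and the variance $\mathrm{Var}(\hat\mu)$; (ii) use the two batch-size conditions to convert these into explicit fractions of $L(\bm\theta)$ and $L(\bm\theta)^2$; and (iii) feed the resulting bias/variance budgets into reciprocal-moment inequalities to obtain the two claimed bounds on $\mathbb{E}[1/D]$ and $\mathbb{E}[1/D^2]$.

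First I would control $g_i$ by splitting the two independent sources of noise and invoking the law of total variance. For a fixed task, the reverse triangle inequality $\bigl|\,\|\tilde\nabla f_i\|-\|\nabla f_i\|\,\bigr|\le\|\tilde\nabla f_i-\nabla f_i\|$ together with Jensen's inequality and Assumption \ref{assumption 5} gives $0\le\mathbb{E}_{\mathcal{D}}\|\tilde\nabla f_i\|-\|\nabla f_i\|\le\tilde\sigma/\sqrt{D_\beta}$ and $\mathrm{Var}_{\mathcal{D}}(\|\tilde\nabla f_i\|)\le\tilde\sigma^2/D_\beta$ (the variance of a minibatch average over $D_\beta$ i.i.d.\ samples). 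Over the task draw, Assumption 4 yields $\mathrm{Var}_i(\|\nabla f_i\|)\le\mathbb{E}_i\|\nabla f_i-\nabla f\|^2\le\sigma^2$. Combining, the per-sample variance obeys $\mathrm{Var}(g_i)\le\sigma^2+\tilde\sigma^2/D_\beta$, hence $\mathrm{Var}(\hat\mu)=\mathrm{Var}(g_i)/B'\le(\sigma^2+\tilde\sigma^2/D_\beta)/B'$, and the bias satisfies $0\le\mathbb{E}[\hat\mu]-\mu\le\tilde\sigma/\sqrt{D_\beta}$. Substituting $B'\ge\lceil 0.5(\rho\alpha\sigma/L)^2\rceil$ and $D_\beta\ge\lceil 2(\rho\alpha\tilde\sigma/L)^2\rceil$ turns $2\rho\alpha\,(\text{bias})$ and $(2\rho\alpha)^2\mathrm{Var}(\hat\mu)$ into explicit multiples of $L$ and $L^2$; since $L(\bm\theta)\ge 4L$, these become controlled fractions of $L(\bm\theta)$ and $L(\bm\theta)^2$, which is exactly the budget the two claimed constants consume.

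For the first moment I would write $D=L(\bm\theta)+\Delta$ with $\Delta=2\rho\alpha(\hat\mu-\mu)$ and use the elementary inequality $1/(1+t)\ge 1-t$ for $t>-1$ (valid here since $D\ge 4L>0$), giving $1/D\ge 1/L(\bm\theta)-\Delta/L(\bm\theta)^2$; taking expectations leaves only the bias contribution, so $\mathbb{E}[\tilde\beta]\ge 1/L(\bm\theta)-\mathbb{E}[\Delta]/L(\bm\theta)^2\ge 0.8/L(\bm\theta)$ after inserting the bias budget, where the discarded nonnegative remainder $\Delta^2/((1+t)L(\bm\theta)^2)$ only improves the bound. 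For the second moment I would expand $1/D^2$ around $L(\bm\theta)$; the hard floor $D\ge 4L$ guarantees $1/D^2$ is bounded and integrable, and controlling the lower tail of $\hat\mu$ through $\mathrm{Var}(\hat\mu)$ together with the bias yields $\mathbb{E}[\tilde\beta^2]\le 3.125/L(\bm\theta)^2$. I note that the direction actually needed in the convergence proof, and the one consistent with the uniform bound $\tilde\beta\le 1/(4L)$, is an upper bound on the second moment, so I read the stated inequality accordingly.

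The main obstacle is the constant bookkeeping: the two batch-size thresholds are calibrated precisely so that the first-order (bias) term is absorbed into the $0.8$ factor and the combined bias-plus-variance term into the $3.125$ factor, and making these close requires retaining the second-order remainder in the reciprocal expansions rather than discarding it. By contrast, the complex-domain aspect is essentially cosmetic for this lemma: every quantity whose moments we control is the real-valued norm $\|\mathbf{z}\|=\sqrt{\sum_k \mathbf{z}_k\mathbf{z}^*_k}$, i.e.\ the Euclidean norm on $\mathbb{R}^{2m}$, so the reverse triangle inequality, Jensen's inequality, and the law of total variance all apply verbatim, and the Wirtinger-calculus Assumptions \ref{assumption 1}--\ref{assumption 5} enter only to supply the variance constants $\sigma^2,\tilde\sigma^2$ in the complex norm. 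Thus the proof reduces to the real-valued reciprocal-moment estimate for the scalar $\hat\mu$, exactly as in \cite{fallah2020convergence}.
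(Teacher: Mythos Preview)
The paper does not prove this lemma: it is quoted with attribution from \cite{fallah2020convergence} and used as a black box in the proof of Theorem~\ref{theorem 1}, so there is no in-paper argument against which to compare your sketch. Your outline---isolate the sample mean $\hat\mu$, bound its bias via the reverse triangle inequality together with Assumption~\ref{assumption 5}, bound its variance via the law of total variance together with Assumptions~4--5, and then transfer these to $1/D$ and $1/D^2$ through reciprocal-moment inequalities---is the standard route and matches the argument in the cited reference; your closing remark that the complex setting is cosmetic here (everything passes through the real norm $\|\mathbf z\|$) is also exactly how the paper itself treats the extension.

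Two small points. First, you are right that the second-moment inequality is misstated: since $\tilde\beta(\bm\theta)\le 1/(4L)$ deterministically, one has $\mathbb{E}[\tilde\beta^2]\le 1/(16L^2)$, while at $L(\bm\theta)=4L$ the claimed lower bound would read $3.125/(16L^2)>1/(16L^2)$; the intended (and useful) statement is the upper bound $\mathbb{E}[\tilde\beta^2]\le 3.125/L(\bm\theta)^2$, and the $L(\bm\theta^2)$ is a typo for $L(\bm\theta)^2$. Second, the linear truncation $1/D\ge 1/L(\bm\theta)-\Delta/L(\bm\theta)^2$ combined with $\mathbb{E}[\Delta]\le 2\rho\alpha\tilde\sigma/\sqrt{D_\beta}$ yields, under the stated threshold $D_\beta\ge 2(\rho\alpha\tilde\sigma/L)^2$, only $\mathbb{E}[\tilde\beta]\ge(1-\sqrt{2}/4)/L(\bm\theta)\approx 0.65/L(\bm\theta)$, short of $0.8$; even Jensen's inequality $\mathbb{E}[1/D]\ge 1/\mathbb{E}[D]$ gives about $0.74/L(\bm\theta)$. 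So your diagnosis that the constants are the real work is correct, but the first-order expansion you wrote is not by itself enough---one needs either a sharper handling of the bias or to retain the quadratic remainder and combine it with the variance bound (this is also where the $B'$ threshold enters), exactly as you anticipate in your final paragraph.
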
 
	
	\begin{theorem} \label{theorem 1}
		Suppose that Assumptions 1-5 hold and $\alpha\in(0, \frac{1}{6L}]$. Consider running complex-valued MAML with batch sizes $B\geq 20$ and $|Q|\geq \lceil 2\alpha^2\sigma_H^2 \rceil$. Following the definition in Lemma \ref{5}, let $\beta_k=\tilde{\beta}(\bm{\theta}_k)/12$. Then for any $\epsilon > 0$, complex-valued MAML finds a solution that 
		\begin{equation} 
			\mathbb{E}[||\nabla \mathcal{L}_{meta}(\bm{\theta}_\epsilon)||] \leq \max{\left\{ \sqrt{61(1+\frac{\rho\alpha}{L}\sigma)(\frac{\sigma^2}{B} + \frac{\tilde{\sigma}^2}{B|Q|} + \frac{\tilde{\sigma}^2}{|S|})}, \frac{61\rho\alpha}{L}(\frac{\sigma^2}{B} + \frac{\tilde{\sigma}^2}{B|Q|} + \frac{\tilde{\sigma}^2}{|S|}), \epsilon \right\}}
		\end{equation}
		after running for
		\begin{equation}
			\mathcal{O}(1)\Delta\min\left\{ \frac{L+\rho\alpha(\sigma+\epsilon)}{\epsilon^2}, \frac{LB}{\sigma^2} + \frac{L(B|Q|+|S|)}{\tilde{\sigma}^2} \right\}
		\end{equation}
		iterations, where $\Delta$ is defined in Assumption 1 and $|S|$ and $|Q|$ denotes the size of the support set and query set, respectively.
	\end{theorem}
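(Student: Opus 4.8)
The plan is to transport the convergence analysis of \cite{fallah2020convergence} for real-valued MAML into the Wirtinger-calculus setting, using Lemma \ref{lemma4} for the exact form of the complex meta-gradient and Lemma \ref{5} for the adaptive meta step size. Throughout I work with $F(\bm{\theta})=\mathbb{E}_{i\sim p}[F_i(\bm{\theta})]$ as in \eqref{of}, with $\nabla F$ read as the complex gradient vector $2\,\partial F/\partial\bm{\theta}^*$ of \eqref{cgv}. The first step is to establish smoothness of the meta-objective: starting from \eqref{lq}, I write $\nabla F_i(\bm{\theta})=(\mathbf{I}-\alpha\mathbf{H}^i_{\bm{\theta}\bm{\theta}})\nabla_{\bm{\theta}'_i}f_i(\bm{\theta}'_i)-\alpha\mathbf{H}^i_{\bm{\theta}^*\bm{\theta}}\nabla_{(\bm{\theta}'_i)^*}f_i(\bm{\theta}'_i)$ with $\mathbf{H}^i$ from \eqref{H}, combine the $L_i$-smoothness of $\nabla f_i$ and the $\rho_i$-Lipschitz continuity of the second-order Wirtinger Hessian (Assumptions 2--3) with the bound $\|\bm{\theta}'_i-\bm{\mu}'_i\|\le(1+\alpha L)\|\bm{\theta}-\bm{\mu}\|$ on the inner-loop map \eqref{g}, and deduce $\|\nabla F_i(\bm{\theta})-\nabla F_i(\bm{\mu})\|\le L_{F_i}(\bm{\theta})\|\bm{\theta}-\bm{\mu}\|$ with $L_{F_i}(\bm{\theta})=\mathcal{O}(L)+\mathcal{O}(\rho\alpha)\|\nabla f_i(\bm{\theta})\|$; averaging over tasks recovers the quantity $L(\bm{\theta})=4L+2\rho\alpha\,\mathbb{E}_{i\sim p}\|\nabla f_i(\bm{\theta})\|$ that appears in Lemma \ref{5}.

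Next I would characterize the stochastic meta-gradient $g_k$ actually used in Algorithm \ref{alg:training}, which is built from the support-minibatch gradient $\widetilde{\nabla}f_i(\bm{\theta}_k,S_i)$, the query-minibatch gradient $\widetilde{\nabla}f_i(\bm{\theta}'_i,Q_i)$, and Hessian estimates $\widetilde{\mathbf{H}}^i$. Since $g_k$ is a product of a Hessian-dependent matrix and a gradient, it is a biased estimate of $\nabla F(\bm{\theta}_k)$: using Assumption \ref{assumption 5} (bounded minibatch-gradient and Hessian variances $\tilde{\sigma}^2,\sigma_H^2$) I would bound $\|\mathbb{E}[g_k\mid\bm{\theta}_k]-\nabla F(\bm{\theta}_k)\|$ by a term of order $\alpha^2\rho\sigma_H^2/|Q|$ together with $\mathcal{O}(L^2\alpha^2\tilde{\sigma}^2/|S|)$, which is absorbed into the noise floor once $|Q|\ge\lceil 2\alpha^2\sigma_H^2\rceil$ and $B\ge 20$. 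A parallel triangle-inequality argument across the three independent sampling sources --- the task batch of size $B$, the query minibatch, and the support minibatch --- yields the conditional-variance bound $\mathbb{E}[\|g_k-\mathbb{E}[g_k\mid\bm{\theta}_k]\|^2]\le c\,\bigl(1+\tfrac{\rho\alpha}{L}\sigma\bigr)\bigl(\tfrac{\sigma^2}{B}+\tfrac{\tilde{\sigma}^2}{B|Q|}+\tfrac{\tilde{\sigma}^2}{|S|}\bigr)$, exactly matching the expression inside the $\max$ in \eqref{e}.

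With smoothness, bias and variance in hand, I would run the standard descent argument for the $L(\bm{\theta}_k)$-smooth function $F$ along the update $\bm{\theta}_{k+1}=\bm{\theta}_k-\beta_k g_k$ with $\beta_k=\tilde{\beta}(\bm{\theta}_k)/12$: take conditional expectations in the descent inequality, substitute the Step-2 bounds, and invoke Lemma \ref{5} to replace the adaptive $\beta_k$ by the usable estimates $\mathbb{E}[\tilde{\beta}(\bm{\theta}_k)]\ge 0.8/L(\bm{\theta}_k)$ and $\mathbb{E}[\tilde{\beta}(\bm{\theta}_k)^2]\ge 3.125/L(\bm{\theta}_k)^2$, so that the expected one-step decrease of $F$ is at least a constant multiple of $\mathbb{E}[\|\nabla F(\bm{\theta}_k)\|^2]/L(\bm{\theta}_k)$ minus the noise floor. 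Summing over $k=0,\dots,K-1$, bounding the telescoped sum by $\Delta$ from Assumption \ref{assumption 1}, and using $\mathbb{E}_{i\sim p}\|\nabla f_i(\bm{\theta}_k)\|\le\sigma+\|\nabla F(\bm{\theta}_k)\|$ (from Assumption 4) so that $L(\bm{\theta}_k)\le 4L+2\rho\alpha(\sigma+\|\nabla F(\bm{\theta}_k)\|)$, I rearrange to obtain both the stated bound on $\mathbb{E}[\|\nabla\mathcal{L}_{meta}(\bm{\theta}_\epsilon)\|]$ and the two-regime iteration count $\mathcal{O}(1)\Delta\min\{\tfrac{L+\rho\alpha(\sigma+\epsilon)}{\epsilon^2},\ \tfrac{LB}{\sigma^2}+\tfrac{L(B|Q|+|S|)}{\tilde{\sigma}^2}\}$.

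The hard part will be the complex-domain versions of the two core lemmas behind Steps 1--2: unlike the real case, the meta-gradient \eqref{lq} carries the extra conjugate-Hessian term $-\alpha\mathbf{H}^i_{\bm{\theta}^*\bm{\theta}}\nabla_{(\bm{\theta}'_i)^*}f_i$, so I must verify that the norm $\|\mathbf{z}\|=\sqrt{\sum_\ell\mathbf{z}_\ell\mathbf{z}_\ell^*}$ interacts with the second-order Wirtinger derivatives exactly as the Euclidean norm does with real Hessians --- that is, that the complex-domain forms of $L_i$-smoothness and $\rho_i$-Lipschitz-Hessian stated in Assumptions 2--3 are strong enough to control this extra term and its contribution to the bias of the product estimator $g_k$. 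Once these complex analogues of the estimates in \cite{fallah2020convergence} are in place, the descent-and-telescoping machinery of Step 3 carries over essentially verbatim, with Lemma \ref{5} used as stated.
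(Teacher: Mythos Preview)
Your plan is correct and would work, but the paper takes a shorter route that sidesteps exactly the ``hard part'' you flag at the end. Rather than re-deriving the smoothness, bias and variance lemmas of \cite{fallah2020convergence} term by term in the presence of the extra conjugate-Hessian block $-\alpha\mathbf{H}^i_{\bm{\theta}^*\bm{\theta}}\nabla_{(\bm{\theta}'_i)^*}f_i$, the paper stacks $\bm{\theta}$ and $\bm{\theta}^*$ into the conjugate coordinate vector $\bm{\phi}=(\bm{\theta},\bm{\theta}^*)^\top\in\mathbb{C}^{2m}$ and observes that, in these coordinates, the four blocks $\mathbf{H}^i_{\bm{\theta}\bm{\theta}},\mathbf{H}^i_{\bm{\theta}^*\bm{\theta}},\mathbf{H}^i_{\bm{\theta}\bm{\theta}^*},\mathbf{H}^i_{\bm{\theta}^*\bm{\theta}^*}$ assemble into the single Wirtinger Hessian $\nabla_{\bm{\phi}}^2 f_i$, so that the complex meta-gradient collapses to $G_i(\bm{\phi})=(\mathbf{I}-\alpha\nabla_{\bm{\phi}}^2 f_i(\bm{\phi}))\nabla_{\bm{\phi}'_i}f_i(\bm{\phi}'_i)$ --- formally identical to the real MAML gradient. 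Since Assumptions 2--5 are already phrased in terms of this Wirtinger Hessian and the norm on $\mathbb{C}^{2m}$ is the ordinary Euclidean one, the entire analysis of \cite{fallah2020convergence} applies verbatim to the $\bm{\phi}$-dynamics; the paper then closes by noting that updating $\bm{\theta}$ is equivalent to updating $\bm{\phi}$ because $G_i(\bm{\theta}^*)=(G_i(\bm{\theta}))^*$. Your direct approach has the advantage of making every constant explicit and of isolating exactly where each assumption is used, but the paper's reduction is considerably less work: the ``extra'' term you worry about never has to be bounded separately, because it is just one block of a larger object that already has the right structure.
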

	\begin{proof}
		Define $G_i(\bm{\theta}) \triangleq \nabla_{\bm{\theta}} F_i(\bm{\theta})$. $G_i(\bm{\theta})$ can be written as
		\begin{equation}
			\begin{aligned}
				G_i(\bm{\theta}) &= \left( \mathbf{I} - \alpha \mathbf{H}^i_{\bm{\theta} \bm{\theta}} \right) \nabla_{\bm{\theta}'_i} f_i(\bm{\theta}'_i) - \alpha  \mathbf{H}^i_{\bm{\theta}^* \bm{\theta}} \nabla_{\bm{(\bm{\theta}'_i)^*}} f_i(\bm{\theta}'_i)\\
				&=  \nabla_{\bm{\theta}'_i} f_i(\bm{\theta}'_i) - \alpha \left(\mathbf{H}^i_{\bm{\theta} \bm{\theta}}\nabla_{\bm{\theta}'_i} f_i(\bm{\theta}'_i) + \mathbf{H}^i_{\bm{\theta}^* \bm{\theta}} \nabla_{\bm{(\bm{\theta}'_i)^*}} f_i(\bm{\theta}'_i)\right).
			\end{aligned}
		\end{equation}
		Then, $G_i(\theta^*)$ can be expressed as 
		\begin{equation}
			\begin{aligned} \label{40}
				G_i(\bm{\theta}^*) &=  \left(G_i(\theta)\right)^*\\
				&=\left( \nabla_{\bm{\theta}'_i} f_i(\bm{\theta}'_i) - \alpha \left(\mathbf{H}^i_{\bm{\theta} \bm{\theta}}\nabla_{\bm{\theta}'_i} f_i(\bm{\theta}'_i) + \mathbf{H}^i_{\bm{\theta}^* \bm{\theta}} \nabla_{\bm{(\bm{\theta}'_i)^*}} f_i(\bm{\theta}'_i)\right)\right)^*\\
				& = \nabla_{(\bm{\theta}'_i)^*} f_i(\bm{\theta}'_i) - \alpha \left(\mathbf{H}^i_{\bm{\theta} \bm{\theta}^* }\nabla_{\bm{\theta}'_i} f_i(\bm{\theta}'_i) + \mathbf{H}^i_{\bm{\theta}^* \bm{\theta}^*} \nabla_{\bm{(\bm{\theta}'_i)^*}} f_i(\bm{\theta}'_i)\right)
			\end{aligned}
		\end{equation}
		where the first and second equalities are because the loss function is real numbers. According to Wirtinger derivatives \cite{hjorungnes2011complex}, define conjugate coordinates $\mathbf{\phi} = \begin{bmatrix} \bm{\theta} \\ \bm{\theta}^* \end{bmatrix}$, and we have
		\begin{equation}
			\begin{aligned} \label{41}
				G_i(\bm{\phi}) &= \begin{bmatrix}  G_i(\bm{\theta}) \\ G_i(\bm{\theta}^*) \end{bmatrix} \\
				&= \begin{bmatrix} \nabla_{\bm{\theta}'_i} f_i(\bm{\theta}'_i) - \alpha \left(\mathbf{H}^i_{\bm{\theta} \bm{\theta}}\nabla_{\bm{\theta}'_i} f_i(\bm{\theta}'_i) + \mathbf{H}^i_{\bm{\theta}^* \bm{\theta}} \nabla_{\bm{(\bm{\theta}'_i)^*}} f_i(\bm{\theta}'_i)\right) \\ \nabla_{(\bm{\theta}'_i)^*} f_i(\bm{\theta}'_i) - \alpha \left(\mathbf{H}^i_{\bm{\theta} \bm{\theta}^* }\nabla_{\bm{\theta}'_i} f_i(\bm{\theta}'_i) + \mathbf{H}^i_{\bm{\theta}^* \bm{\theta}^*} \nabla_{\bm{(\bm{\theta}'_i)^*}} f_i(\bm{\theta}'_i)\right) \end{bmatrix}\\
				&= \begin{bmatrix} \nabla_{\bm{\theta}'_i} f_i(\bm{\theta}'_i)  \\ \nabla_{(\bm{\theta}'_i)^*} f_i(\bm{\theta}'_i)  \end{bmatrix} - \alpha \begin{bmatrix} \mathbf{H}^i_{\bm{\theta} \bm{\theta}} & \mathbf{H}^i_{\bm{\theta}^* \bm{\theta}} \\ \mathbf{H}^i_{\bm{\theta} \bm{\theta}^* } & \mathbf{H}^i_{\bm{\theta}^* \bm{\theta}^*} \end{bmatrix} \begin{bmatrix} \nabla_{\bm{\theta}'_i} f_i(\bm{\theta}'_i)  \\ \nabla_{(\bm{\theta}'_i)^*} f_i(\bm{\theta}'_i)  \end{bmatrix} \\
				& =  \nabla_{\bm{\phi}'_i} f_i(\bm{\theta}'_i) - \alpha \nabla_{\bm{\phi}}^2 \mathcal{L}_{S_i}(\bm{\theta}) \nabla_{\bm{\phi}'_i} f_i(\bm{\theta}'_i) \\
				& = \left(\mathbf{I} -  \alpha \nabla_{\bm{\phi}}^2 f_i(\bm{\theta}) \right) \nabla_{\bm{\phi}'_i} f_i(\bm{\theta}'_i)\\
				& = \left(\mathbf{I} -  \alpha \nabla_{\bm{\phi}}^2 f_i(\bm{\phi}) \right) \nabla_{\bm{\phi}'_i} f_i(\bm{\phi}'_i)
			\end{aligned}
		\end{equation}
		where the second equality follows \eqref{40} and \eqref{41}, the fourth equality is given by the definition of complex gradient and hessian, and the last equality follows Wirtinger derivatives. Obviously, $G_i(\bm{\phi})$ has the same form with the outer-loop update gradient in MAML. Thus, with the Assumptions 2-5 which extend the Assumptions in \cite{fallah2020convergence} to complex domain, we have similar derivation and conclusion as \cite{fallah2020convergence}. Since $\bm{\theta} = \left(\bm{\theta}^*\right)^*$ and $G_i(\bm{\theta}) = \left(G_i(\bm{\theta}^*)\right)^*$, updating meta-parameter $\bm{\theta}$ is equivalent to updating $\bm{\phi}$. Hence, Theorem \ref{theorem 1} holds.
		This completes the proof.
		
	\end{proof}
	
	\section{Toy Experiment of the Chain Rule}
	The complex chain rule is significant for the back-propagation of CDCVNN, especially in the presence of non-analytic functions, and we conduct a toy experiment to verify it. 
	The toy network is defined as $J=h_3(h_2(h_1(x)))$, where $h_1(x)=x^*$, $h_2(x)=x^2$, and $h_3(x)=|\exp{-x}|$. Figure \ref{toyexample} demonstrates that the complex chain rule is succeed in optimizing $J$ while the naive chain rule fail due to $\dfrac{\partial h_1(x)}{\partial x} = \dfrac{\partial x^*}{\partial x} = 0$. 
	
	\begin{figure}[htbp]
		\centering
		\includegraphics[width= 10 cm]{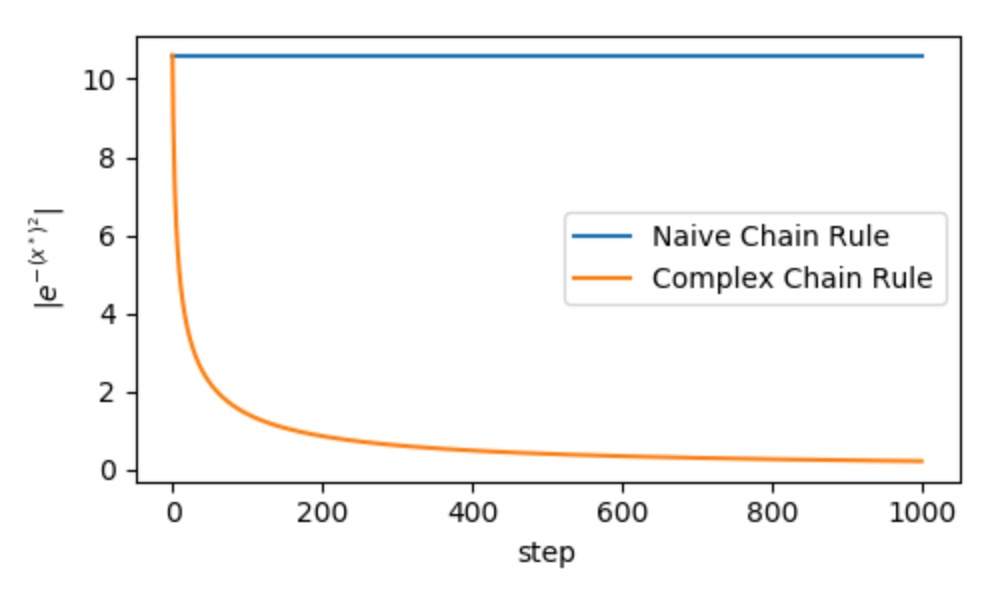}
		\caption{The comparison of back-propagation of the toy network $J$ via the complex chain rule and naive chain rule.}
		\label{toyexample}
	\end{figure}
	
	\section{Confusion Matrices}
	The class-confusion matrices for MAML and CAMEL are showed in Figure~\ref{fig4} and Figure~\ref{fig5} for 1 shot and 5 shot case, respectively. These tell the actual labels and predict labels of the testing samples. From the images we can find out that we will have better classification performance in 5 shot case than 1 shot case, and better performance with model CAMEL than MAML. The results also indicate that two models both perform well for the class ’CPFSK’ and are easy to get confused for the other 4 classes.  Additionally, CAMEL has much better performance for these remaining 4 classes.
	
	\begin{figure}[htbp]
		
		\subfigure %第一张子图
		{
			\begin{minipage}{6.8cm}
				\centering          %子图居中
				\includegraphics[scale=0.45]{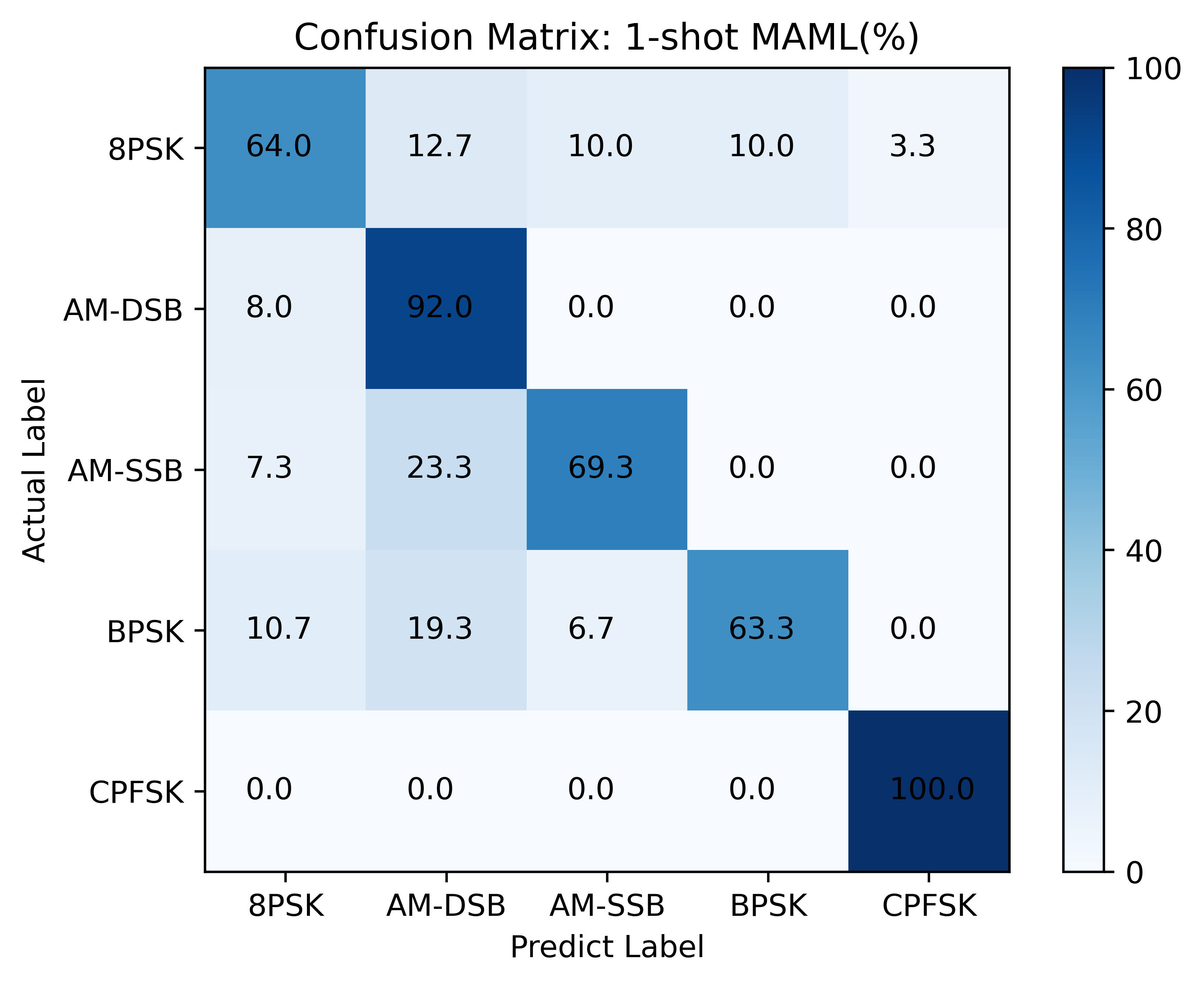}   %以pic.jpg的0.5倍大小输出
			\end{minipage}
		}
		\subfigure %第二张子图
		{
			\begin{minipage}{6.8cm}
				\centering      %子图居中
				\includegraphics[scale=0.45]{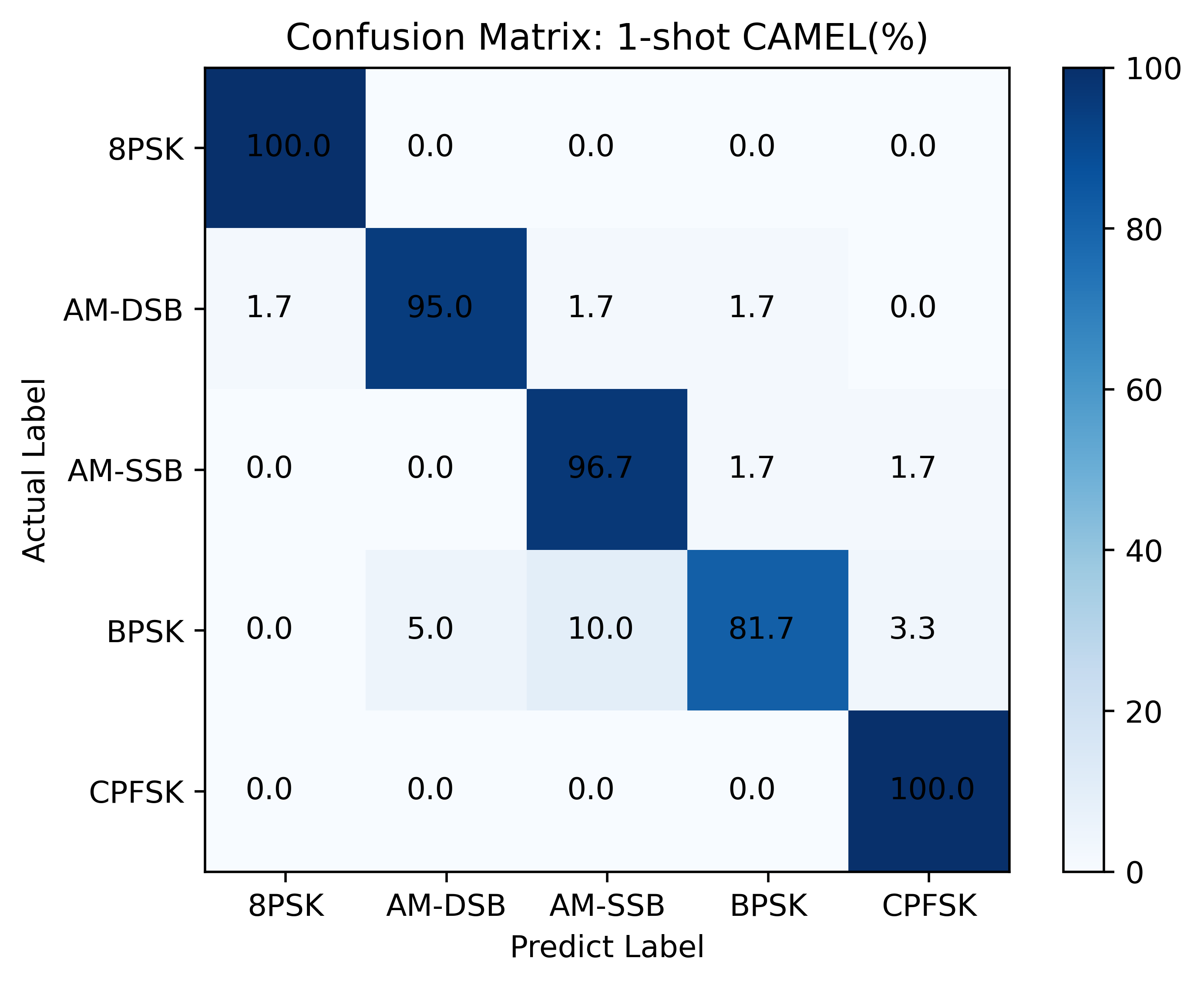}   %以pic.jpg的0.5倍大小输出
			\end{minipage}
		}
		
		\caption{Confusion Matrix: compare our model CAMEL with MAML in 1-shot case for classification tasks on the dataset RADIOML 2016.10A. The y axis shows the actual label of samples and the x axis shows the predict label. The data is presented in percentage. Left: Model-Agnostic Meta-Learning model. Right: Complex-valued  Attentional MEta Learner. The results tell that two models both perform well for the class 'CPFSK' and CAMEL has much better performance for other classes.} %  %大图名称
		\label{fig4}  %图片引用标记
	\end{figure}
	
	\begin{figure}[htbp]
		
		\subfigure %第一张子图
		{
			\begin{minipage}{6.8cm}
				\centering          %子图居中
				\includegraphics[scale=0.45]{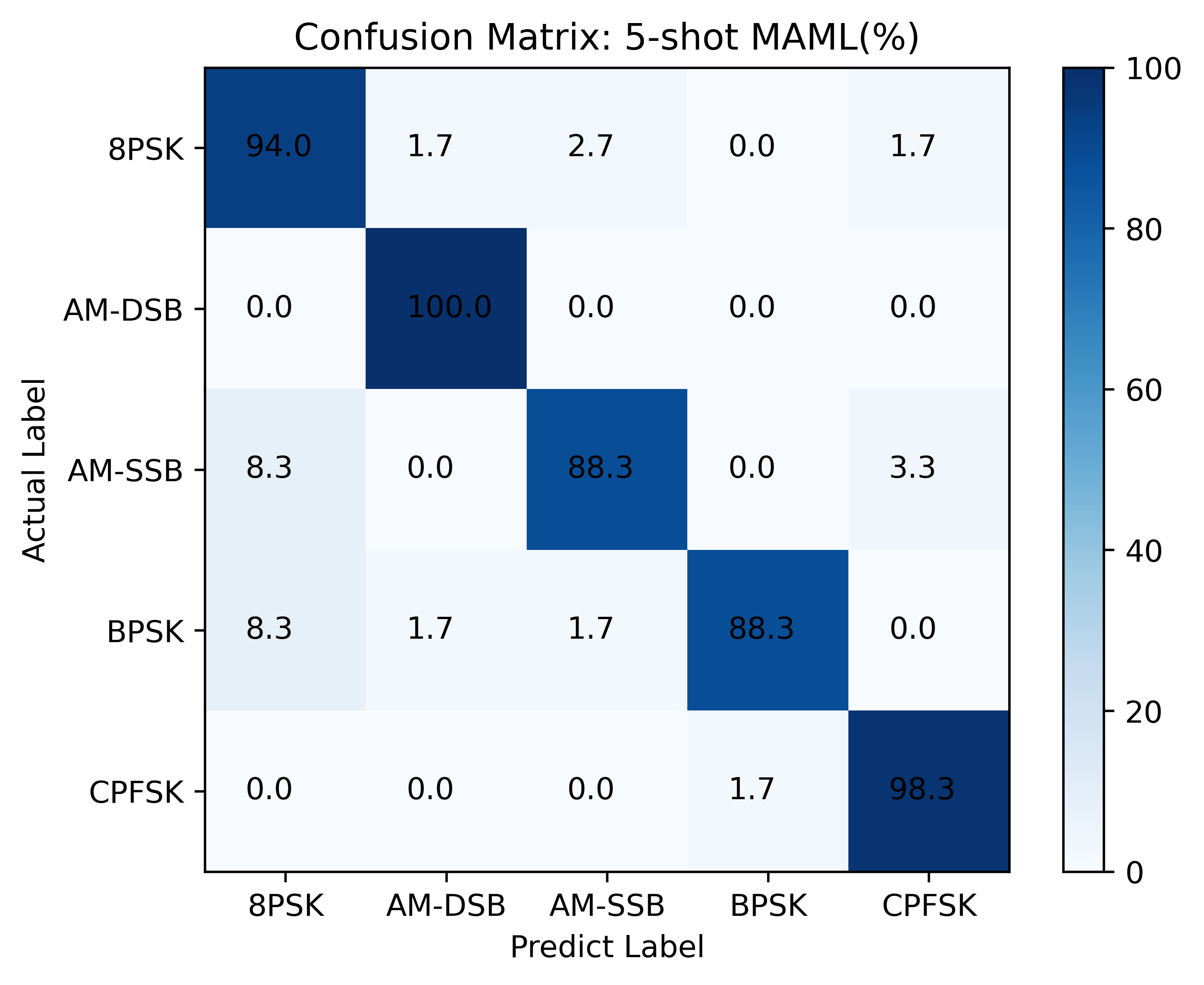}   %以pic.jpg的0.5倍大小输出
			\end{minipage}
		}
		\subfigure %第二张子图
		{
			\begin{minipage}{6.8cm}
				\centering      %子图居中
				\includegraphics[scale=0.45]{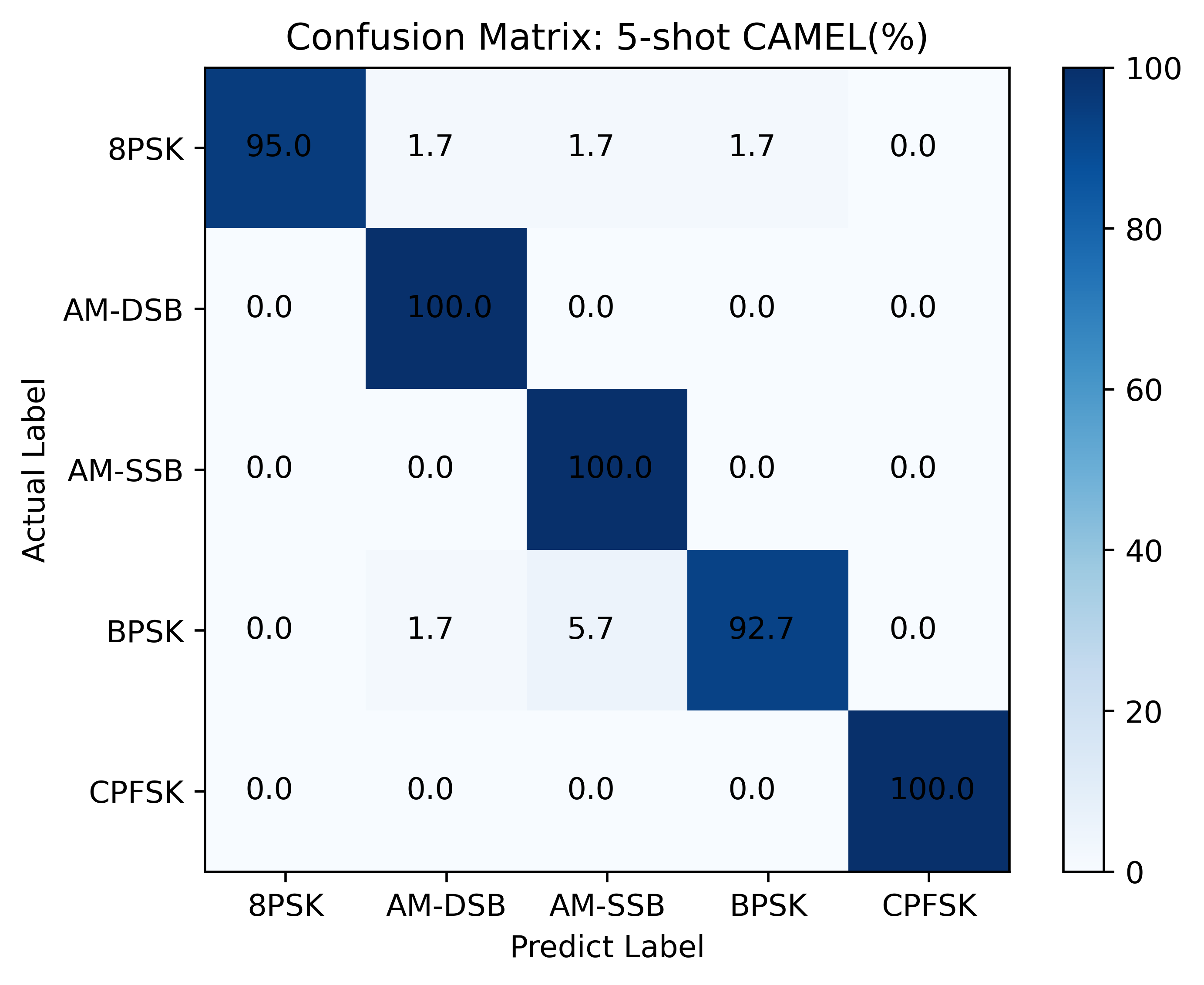}   %以pic.jpg的0.5倍大小输出
			\end{minipage}
		}
		
		\caption{Confusion Matrix: compare our model CAMEL with MAML in 5-shot case for classification tasks on the dataset RADIOML 2016.10A. The data is presented in percentage. Left: MAML. Right: CAMEL. From the result, two models are both able to fulfil the classification task well, and CAMEL does a better job.} %  %大图名称
		\label{fig5}  %图片引用标记
	\end{figure}
	
\end{document}